\documentclass[letterpaper, 10 pt, conference]{IEEEconf}
\IEEEoverridecommandlockouts
\overrideIEEEmargins\usepackage{graphicx} 

\usepackage{amsmath, amsfonts, amssymb}
\usepackage{url}
\usepackage{algorithm}
\usepackage{algpseudocode}
\usepackage{subcaption}
\usepackage{rotating}

\usepackage{mathtools} 

\renewcommand{\Re}{\mathbb{R}}
\newcommand{\Expectation}{\mathop{{}\mathbb{E}}}

\usepackage{todonotes}

\usepackage{amsthm}  

\newtheorem{lemma}{Lemma}
\newtheorem{theorem}{Theorem}
\newtheorem{proposition}{Proposition}
\newtheorem{remark}{Remark}
\newtheorem{corollary}{Corollary}


\makeatletter
\let\NAT@parse\undefined
\makeatother
\usepackage{hyperref}
\hypersetup{
   colorlinks=true,
    linkcolor= blue,
    allcolors=blue,
    citecolor = blue,
    filecolor=black,
    urlcolor=blue,
    }

\title{Dual Control for Interactive Autonomous Merging \\ with Model Predictive Diffusion}

\author{Jacob Knaup,
Jovin~D'sa,
Behdad~Chalaki,
Hossein~Nourkhiz Mahjoub,
Ehsan~Moradi-Pari,
Panagiotis Tsiotras
\thanks{\hangindent=0.5cm J. Knaup and P. Tsiotras are with the Institute for Robotics and Intelligent Machines,
Georgia Institute of Technology,  Atlanta, GA 30332--0250 USA. This work was conducted during J. Knaup's internship at Honda Research Institute. (email: jacobk@gatech.edu)}
\thanks{J. D'sa, B. Chalaki, H. N. Mahjoub, and E. Moradi-Pari are with Honda Research Institute, USA Inc. (email: jovin{\_}dsa@honda-ri.com)}
}

\begin{document}

\maketitle

\begin{abstract}
Interactive decision-making is essential in applications such as autonomous driving, where the agent must infer the behavior of nearby human drivers while planning in real-time. Traditional predict-then-act frameworks are often insufficient or inefficient because accurate inference of human behavior requires a continuous interaction rather than isolated prediction. 
To address this, we propose an active learning framework in which we rigorously derive predicted belief distributions.
Additionally, we introduce a novel model-based diffusion solver tailored for online receding horizon control problems, demonstrated through a complex, non-convex highway merging scenario. 
Our approach extends previous high-fidelity dual control simulations to hardware experiments, which may be viewed at \url{https://youtu.be/Q_JdZuopGL4}, and verifies behavior inference in human-driven traffic scenarios, moving beyond idealized models. 
The results show improvements in adaptive planning under uncertainty, advancing the field of interactive decision-making for real-world applications. 
    
\end{abstract}

\section{Introduction}

\subsection{Motivation}
Robots, such as autonomous vehicles (AV), need to safely and efficiently interact with humans and make decisions based on human intentions and behaviors. In highly dynamic scenarios, these interactions become particularly complex, as AVs must anticipate and respond to the unpredictable actions of human drivers whose intentions are often unobservable or observed too late. One such challenging scenario is highway on-ramp merging, where the safe and efficient merging of an AV depends on correctly understanding and predicting human drivers’ intention in real time. Relying on strong assumptions about human behavior models, as traditional Model Predictive Control (MPC) approaches often do, may result in suboptimal and potentially unsafe merging outcomes. 

To address this challenge, we propose a dual control framework that aims to accomplish two objectives: performing an efficient merging maneuver while actively reducing uncertainty about other human drivers' intentions. Dual control enables the system to balance exploration—probing the behavior of other vehicles to better understand their intent, which may not be readily observable, and exploitation—leveraging known information to take optimal control actions. By combining dual control with online Bayesian inference, the proposed approach dynamically updates the belief distribution over human driver behaviors, enabling the AV to make more informed and adaptive control decisions. This strategy not only improves merging efficiency but also enhances safety by continuously refining the AV's understanding of the traffic environment throughout the merging process.

\subsection{Related Work}
MPC has been extensively utilized in the planning and control of robots and autonomous vehicles due to its ability to handle constraints and optimize control actions over a finite horizon \cite{yu2021model}. Recently, several learning-enhanced MPC formulations have been proposed, as highlighted in the survey \cite{aradi2020survey}. Some approaches have integrated learning-based prediction models into the MPC framework \cite{le2024multi}, while others have proposed combining Reinforcement Learning with MPC to better handle the complex, and hard-to-model interactions between autonomous vehicles and humans \cite{albarella2023hybrid,kimura2022decision}. Many of these approaches rely on a ``predict-then-act" framework, where the system predicts future states based on current observations and then acts accordingly. However,  this sequential framework can be insufficient in highly dynamic environments, where continuous interaction and adaptation to human behavior are required. To address this limitation, interaction-aware planners have been developed, such as those by authors in \cite{le2024multi,gupta2023interaction}. Similarly, planners capable of probing human agents to infer their intentions and acting interactively have demonstrated greater effectiveness in highly uncertain environments, as shown in \cite{wang2023active}. 

Several previous works have explored the use of dual control for interactive autonomous driving.
In \cite{nair2022stochastic}, the authors developed a dual MPC framework that actively learns the behavior of other drivers, characterized by a set of basis functions with unknown weights. 
These weights are learned online using a Kalman filter and applied a linear representation for belief propagation.
In the same work, the authors reformulated the dual control problem as a convex second-order cone program, enabling efficient computation of solutions.
However, the linearity assumptions required for belief propagation and convex problem formulation may fail to capture the complex dynamics necessary for accurately modeling the problem.

In \cite{hu2022active} the authors formulated a dual control problem for interactive driving with uncertainty about human driver behavior.
Similar to \cite{nair2022stochastic}, they parameterized the unknown human behavior as a linear combination of known basis functions with unknown weights.
The authors in \cite{hu2022active} approximated the belief propagation using a Gaussian parameterization and solved the resulting dual control program using nonlinear programming (NLP). In \cite{hu2024active}, the same authors extended this work by incorporating a safety filter into the dual control policy to mitigate accidents caused by improbable but high-risk events.

Generative models, particularly those utilizing diffusion processes, have demonstrated to be effective in planning and control applications \cite{janner2022planning, chi2023diffusion, sun2024conformal, psenka2023learning}. 
These models are capable of producing new samples from complex distributions, making them well-suited for addressing non-convex and multi-modal challenges \cite{ho2020denoising,song2019generative}. 
In this work, we introduce a new variant of a model-based diffusion solver, specifically designed for receding horizon optimization, which effectively manages the complexities of autonomous highway merging scenarios. 
Recent studies, such as Pan et al. \cite{pan2024model} have highlighted the effectiveness of model-based diffusion in solving trajectory optimization problems. 
For a comprehensive review of diffusion models and their applications, we refer the reader to \cite{yang2023diffusion}, while the foundational derivations for score-based generative modeling through SDEs are detailed in \cite{song2020score}, providing the basis for the proposed approach.

Highway on-ramp merging is widely recognized as a particularly challenging task for both human drivers and AVs due to the need to negotiate with other drivers under tight time and lane-ending constraints  \cite{rios2016survey,fernandez2021highway}. 
In our previous work \cite{knaup2024active}, we formulated a dual control framework that used model predictive path integral control (MPPI) to solve this problem. 
While our previous framework showed good performance, it relied on the parametric MR-IDM model \cite{holley2023mr}, which performed well in idealized scenarios but had limitations when dealing with the variability and unpredictability of real-world human driver behaviors.

\subsection{Contributions of this Work}
The main contributions of this paper are as follows:
\begin{enumerate}
\item We introduce a dual control framework that generates optimal control actions to achieve the desired goal while also actively reducing uncertainty over the belief distribution of human intents using online Bayesian inference. 
This approach enables intelligent navigation of the trade-off between exploration and exploitation, making it well-suited for complex interaction challenges like autonomous highway merging. 
We expand upon our previous work \cite{knaup2024active} by adding additional theoretical results in support of our Bayesian estimation and belief prediction methodology.

\item We propose a novel variant of a model-based diffusion solver, specifically designed for receding horizon optimization which is highly effective in handling the non-convex multi-modal interaction aspects of the problem. 
This contribution extends the work of \cite{pan2024model} by introducing a dynamic prior distribution which is more suitable for receding horizon control than the authors' original methodology which targeted static, offline optimization.

\item We validate our framework through real-world hardware experiments on F1-Tenth cars, demonstrating the effectiveness of our framework in challenging, real-time traffic merging scenarios. 
This is the first application of model-based diffusion to autonomous driving, an application for which its global solution capabilities are highly suitable due to the non-convex nature of the solution space owing to the presence of multiple dynamic obstacles (e.g., other vehicles).

\item We demonstrate that our framework can effectively adapt its control actions for better interaction with drivers by testing with human-controlled vehicles, addressing the limitation of relying solely on predefined driver models which may not perform well in out-of-distribution scenarios.

\end{enumerate}

\subsection{Outline}

The rest of this paper is organized as follows.
In Section~\ref{sec:problem_formulation}, we introduce the stochastic optimal control problem which this paper addresses, and we introduce the framework for online belief estimation of uncertain parameters.
Section~\ref{sec:proposed_dual_control} introduces the proposed dual control problem formulation,
Section~\ref{sec:proposed_model_based_diffusion} presents the proposed model predictive diffusion algorithm, 
and Section~\ref{sec:proposed_sampling_approximation} summarizes the sampling-based approximations which are used to efficiently solve the dual control problem using model predictive diffusion.
Section~\ref{sec:interactive_autonomous_driving} introduces the interactive autonomous driving application on which we evaluate the proposed approach and presents the experimental results.
Finally, we conclude in Section~\ref{sec:conclusion}.

\begin{figure*}[th]
    \centering
    \begin{subfigure}{0.99\columnwidth}
        \includegraphics[width=\textwidth]{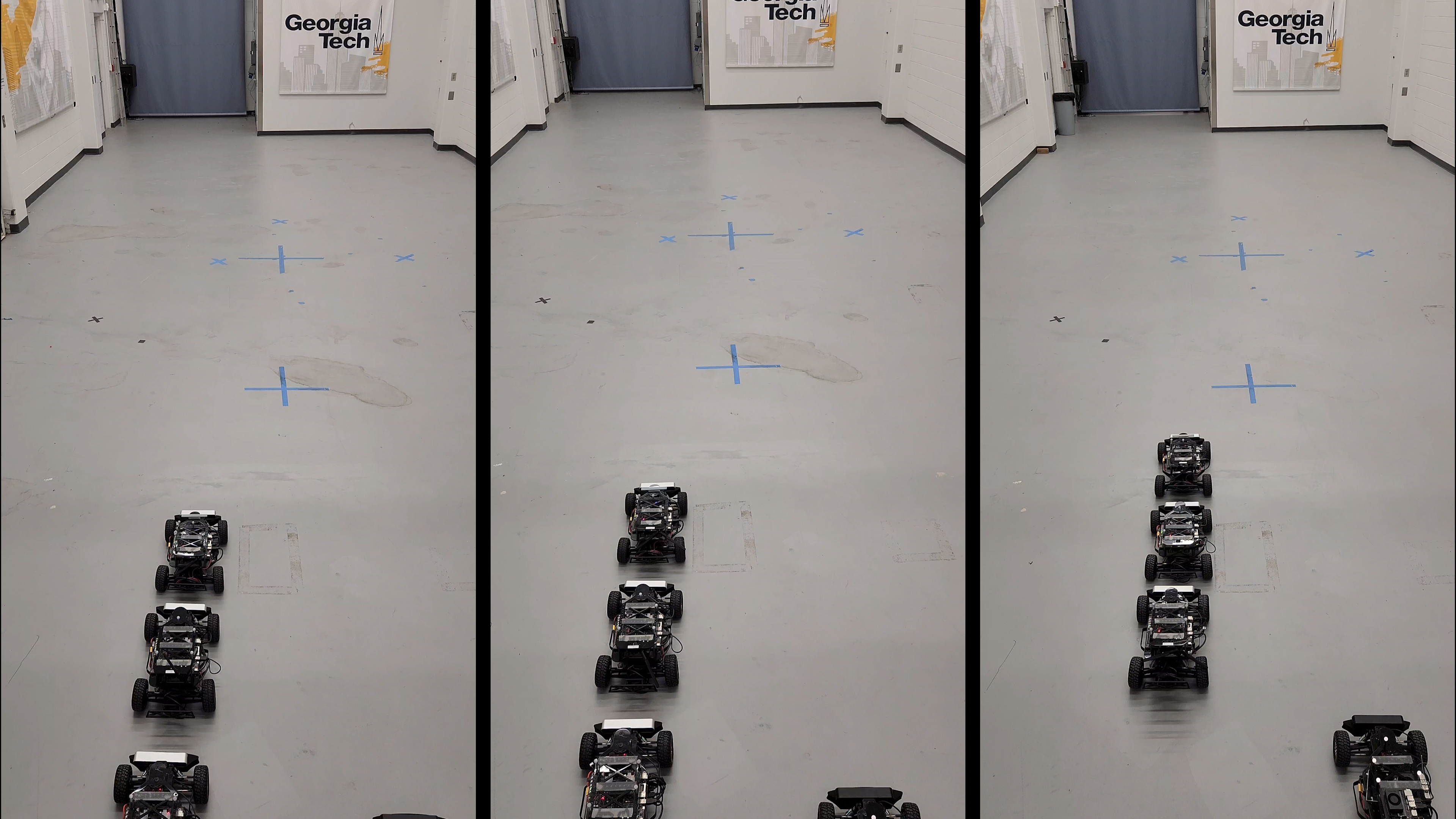}
    \end{subfigure}
    \begin{subfigure}{0.99\columnwidth}
        \includegraphics[width=\textwidth]{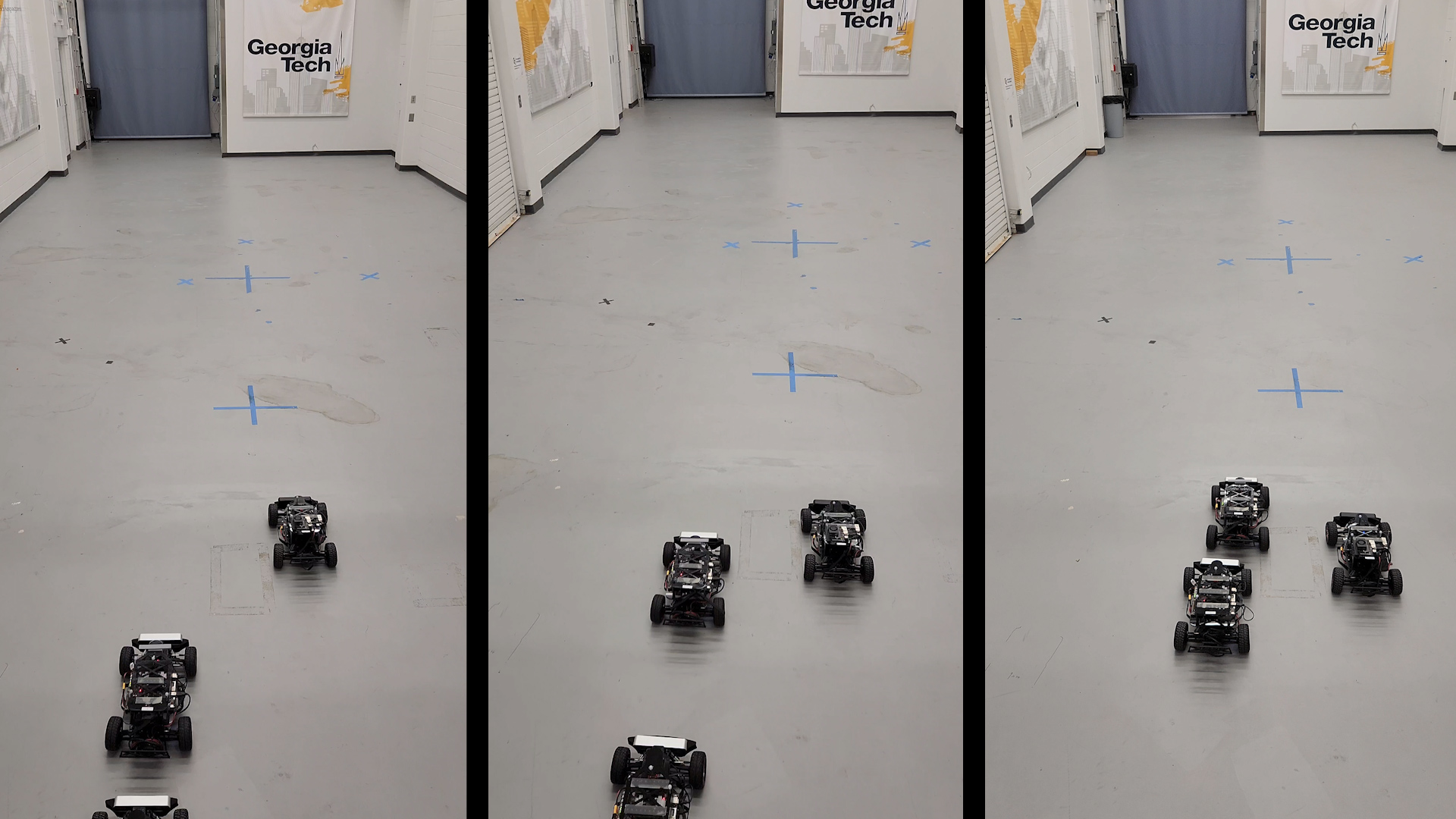}
    \end{subfigure}
    \begin{subfigure}{0.99\columnwidth}
        \includegraphics[width=\textwidth]{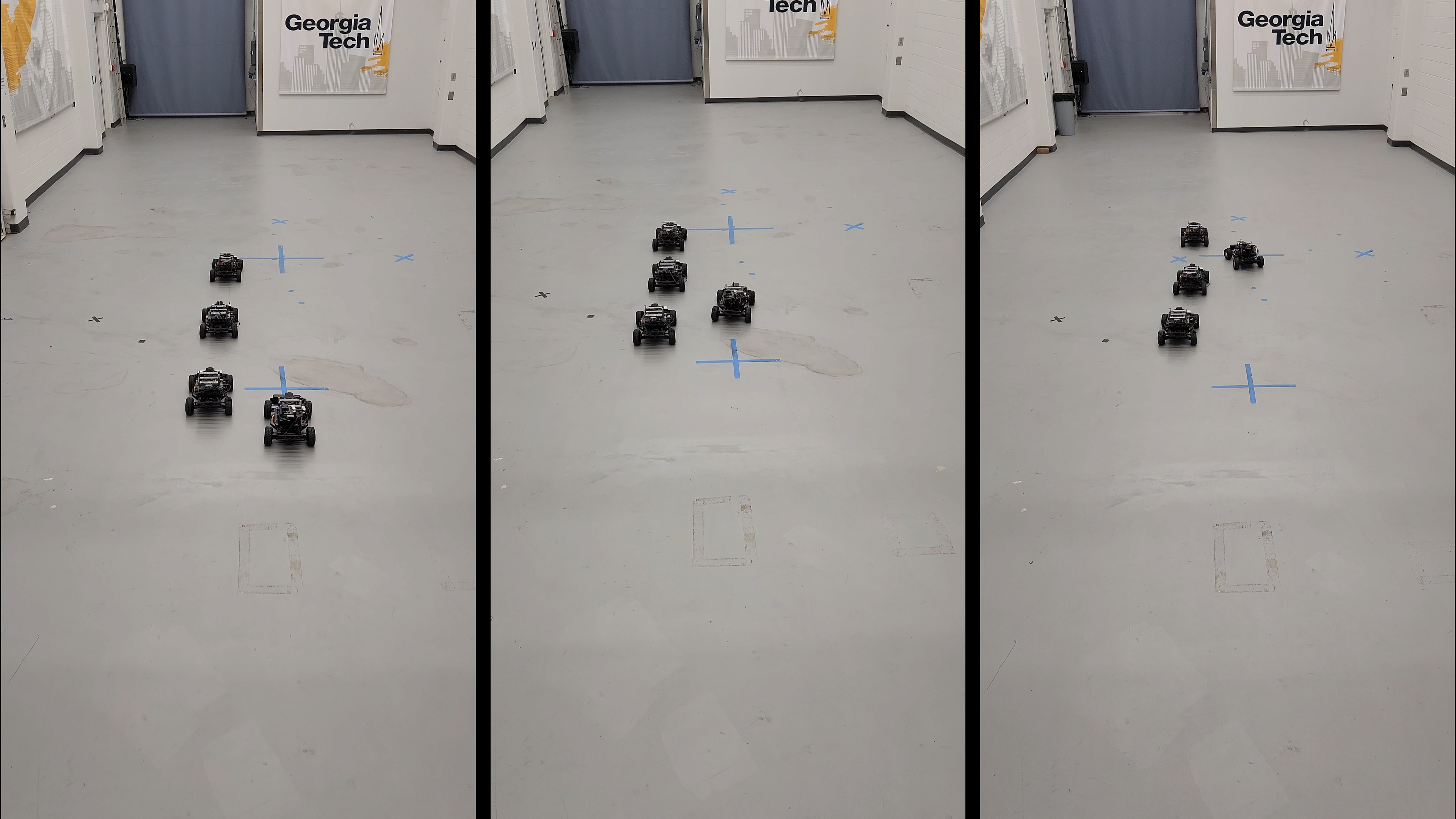}
    \end{subfigure}
    \begin{subfigure}{0.99\columnwidth}
        \includegraphics[width=\textwidth]{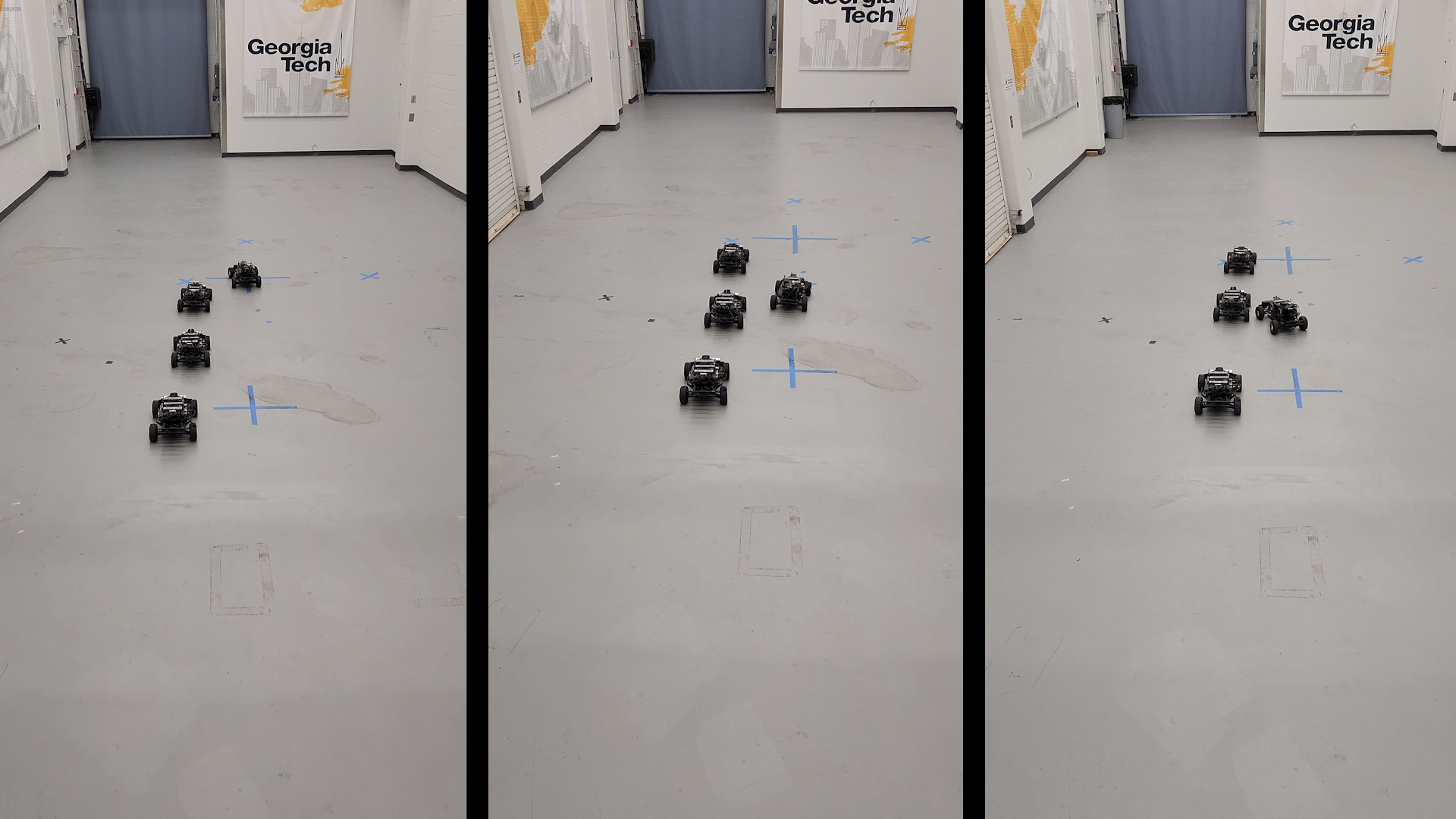}
    \end{subfigure}
    \begin{subfigure}{0.99\columnwidth}
        \includegraphics[width=\textwidth]{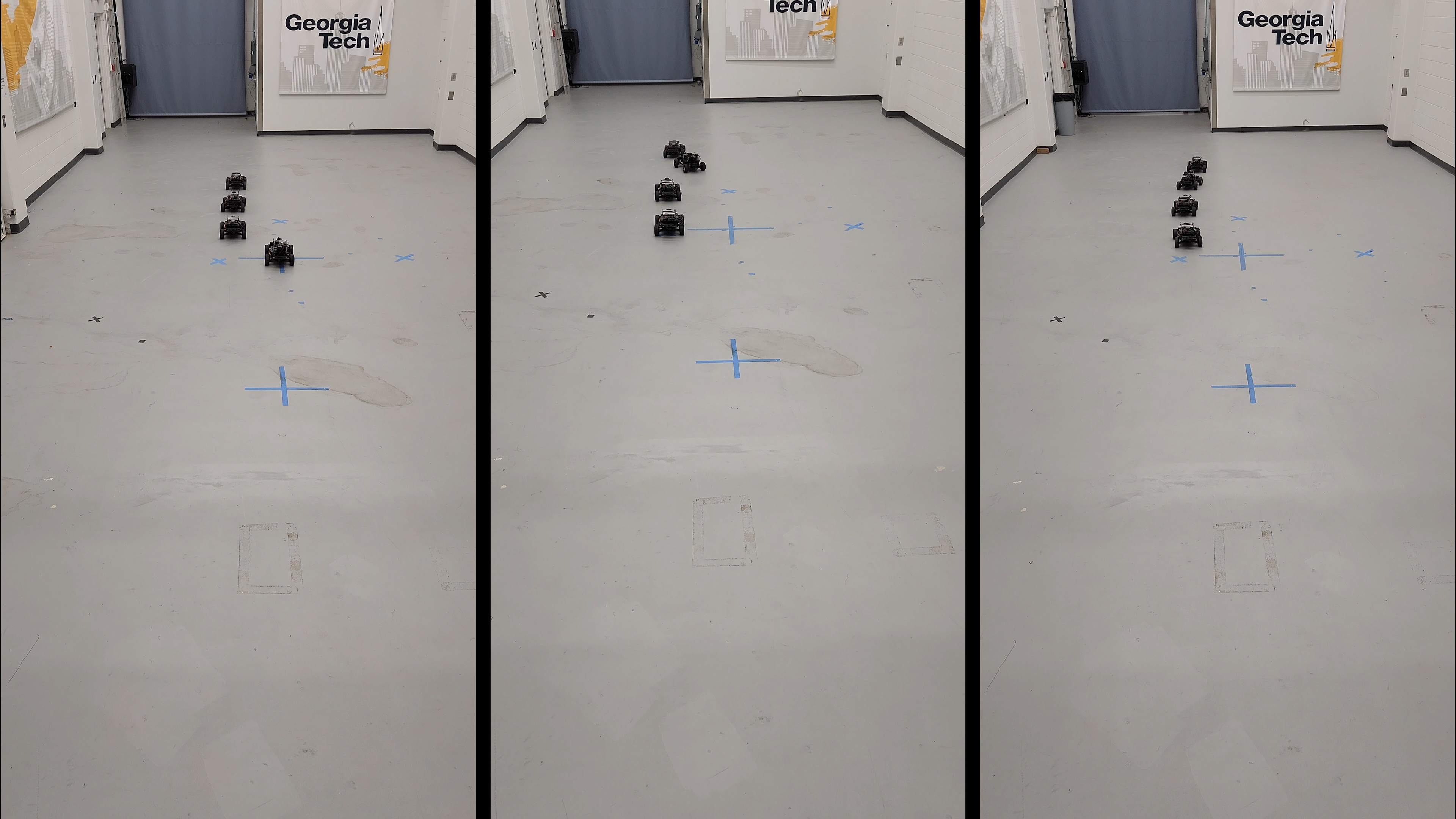}
    \end{subfigure}
    \begin{subfigure}{0.99\columnwidth}
        \includegraphics[width=\textwidth]{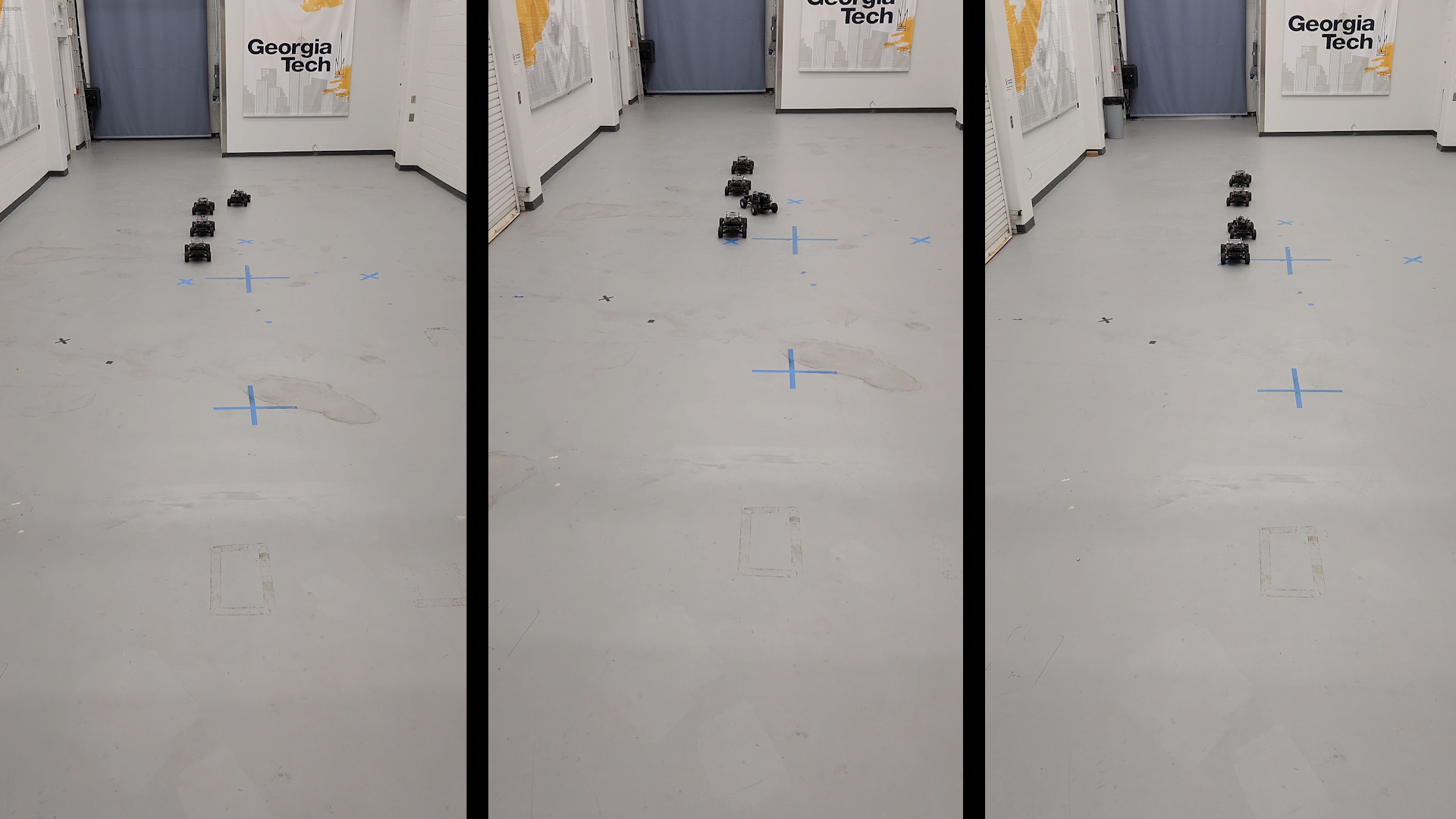}
    \end{subfigure}
    \caption{Snapshots of traffic merge experiment at 8-second increments, when the ego vehicle must overtake (left) or yield (right) to merge. Passive learning with EMPPI (left), active learning with DMPPI (middle) active learning with proposed DMPD (right). A video of several experiments is available at \url{https://youtu.be/Q_JdZuopGL4}.}
    \label{fig:experiment}
\end{figure*}

\section{Problem Formulation} \label{sec:problem_formulation}

Consider a nonlinear stochastic system given as
\begin{align}\label{eq:stoch_sys}
    x_{t+1} \sim f(x_{t}, u_{t}, \bar{\theta}),
\end{align}
where $x_{t} \in \Re^{n_x}$ and $u_{t} \in \Re^{n_u}$ denote the state and control action at time-step $t \in \mathbb{N}$, respectively, and $\bar{\theta} \in \Theta \subseteq \Re^{n_\theta}$ is a vector of unknown system parameters.
%
Let the cost function associated with \eqref{eq:stoch_sys}, for a horizon of length $N \in \mathbb{Z}^{+}$ be given as
\begin{align} \label{eq:horizon_cost}
    J(\mathrm{x}_{t:t+N}, \mathrm{u}_{t:t+N-1}) = \phi(x_{t+N}) + \sum_{k=t}^{t+N-1} \ell_{k}(x_{k}, u_{k}),
\end{align}
where, for $k \geq t$, $\mathrm{u}_{t:k} = \{u_{t}, u_{t+1}, \dots, u_{k}\}$, $\mathrm{x}_{t:k} = \{x_{t}, x_{t+1}, \dots, x_{k}\}$, $\phi(\cdot) : \Re^{n_x} \rightarrow \Re$ is the terminal cost, and $\ell_{k}(\cdot, \cdot) : \Re^{n_x} \times \Re^{n_u} \rightarrow \Re$ is the stage cost.
Our goal is to compute an optimal control policy, $u_{t} = \pi^{\ast}_{t}(x_{t})$, for \eqref{eq:stoch_sys} by solving the receding horizon problem
\begin{subequations} \label{prob:ideal}
    \begin{align}
        & \min_{\mathrm{u}_{t:t+N-1|t}} \Expectation \left[\phi(x_{t+N|t}) + \sum_{k=t}^{t+N-1} \ell_{k}(x_{k|t}, u_{k|t}) \right], \label{cost:ideal} \\
        & \text{subject to} \nonumber\\
        & x_{t|t} = x_{t}, \\
        & x_{k+1|t} \sim f(x_{k|t}, u_{k|t}, \bar{\theta}),
    \end{align}
\end{subequations}
where $\mathrm{u}_{t:t+N-1|t} = \{u_{t|t}, u_{t+1|t}, \dots, u_{t+N-1|t}\}$ and $u_{k|t}$ is the control action at future time $k$ planned at the present time $t$, and, similarly, $x_{k|t}$ is the future state at time $k$ predicted at the present time $t$.

However, since $\bar{\theta}$ is unknown, Problem~\eqref{prob:ideal} is ill-posed.
To resolve this discrepancy, let $\theta \sim b(\cdot)$ be an estimate of $\bar{\theta}$ sampled from the belief distribution $b(\cdot)$. 
Using a fixed belief distribution does not take advantage of information obtained during online operation to reduce uncertainty about $\theta$.
Therefore, we estimate $\theta$ online using Bayesian inference to refine the belief distribution by conditioning on the history of observations, namely,
\begin{align} \label{eq:bayesian_estimation}
    b_{t+1}(\theta) = b(\theta | \xi_{0}, \dots, \xi_{t}, x_{t+1}) ,
\end{align}
where $\xi_{t} = \{x_t, u_t\}$ is the observed information at time $t$.
We introduce the following lemma which allows for efficiently updating the conditional belief distribution in a recursive manner.
\begin{lemma}\label{lem:recursive_bayesian}
    The posterior belief distribution~\eqref{eq:bayesian_estimation} is given recursively by
    \begin{align} \label{eq:bayesian_recursive}
        b_{t+1}(\theta) \propto b_{t}(\theta) f(x_{t+1} | x_{t}, u_{t}, \theta),
    \end{align}
    where $b_{0}(\theta) = b(\theta)$ is the prior distribution, and $f(x_{t+1} | x_{t}, u_{t}, \theta)$ represents the probability density of $x_{t+1}$ under the dynamics conditioned on $x_{t}$, $u_{t}$, and $\theta$.
\end{lemma}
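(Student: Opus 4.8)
The plan is to derive the recursion directly from Bayes' theorem by isolating the single new piece of information that distinguishes the conditioning set of $b_{t+1}$ from that of $b_t$. First I would fix notation for the observation history, writing $\mathcal{D}_t = \{\xi_0, \dots, \xi_{t-1}, x_t\}$ so that $b_t(\theta) = b(\theta \mid \mathcal{D}_t)$ holds by definition, and then observe that the conditioning set of $b_{t+1}$ is exactly $\mathcal{D}_{t+1} = \mathcal{D}_t \cup \{u_t, x_{t+1}\}$, since $\xi_t = \{x_t, u_t\}$ while $x_t$ already appears in $\mathcal{D}_t$. Thus the only genuinely new observations at step $t+1$ are the applied control $u_t$ and the realized next state $x_{t+1}$, which lets me treat $x_{t+1}$ as a single new measurement on top of the belief $b_t$.

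Next I would apply Bayes' theorem to this new measurement, giving
\[
b_{t+1}(\theta) = \frac{p(x_{t+1} \mid \theta, \mathcal{D}_t, u_t)\, p(\theta \mid \mathcal{D}_t, u_t)}{p(x_{t+1} \mid \mathcal{D}_t, u_t)}.
\]
Two simplifications then collapse this into \eqref{eq:bayesian_recursive}. First, the Markov structure of the dynamics \eqref{eq:stoch_sys} implies that the next state depends on the past only through the current state, control, and parameter, so that $p(x_{t+1} \mid \theta, \mathcal{D}_t, u_t) = f(x_{t+1} \mid x_t, u_t, \theta)$. Second, the control $u_t$ is produced by a policy depending only on the information available to the controller and not on the hidden parameter, so $\theta$ is conditionally independent of $u_t$ given $\mathcal{D}_t$, yielding $p(\theta \mid \mathcal{D}_t, u_t) = b(\theta \mid \mathcal{D}_t) = b_t(\theta)$. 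The denominator $p(x_{t+1} \mid \mathcal{D}_t, u_t)$ does not depend on $\theta$ and is therefore absorbed into the normalizing constant, producing the stated proportionality. The base case is immediate: with an empty history, $b_0(\theta) = b(\theta)$ is the prior by construction.

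The main obstacle I anticipate is not the algebra but stating the two structural assumptions cleanly, and in particular justifying the step $p(\theta \mid \mathcal{D}_t, u_t) = b_t(\theta)$, namely that conditioning on the applied control leaves the belief over $\theta$ unchanged. This is where the dual-control setting must be made precise: the policy is a (possibly stochastic) function of the observable state and the current belief, which excludes direct access to $\bar{\theta}$, so $u_t$ carries no additional information about $\theta$ beyond what $\mathcal{D}_t$ already provides. I would therefore state this conditional-independence (measurability) property of the policy as an explicit assumption before invoking it, together with the Markov property of \eqref{eq:stoch_sys}, so that the recursion holds rigorously rather than merely formally.
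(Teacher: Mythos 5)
Your proposal is correct and follows essentially the same route as the paper's proof: apply Bayes' rule to the new measurement $x_{t+1}$ and invoke the Markov property of~\eqref{eq:stoch_sys} to reduce the likelihood to $f(x_{t+1} \mid x_t, u_t, \theta)$, absorbing the $\theta$-independent normalizer. The only difference is that you explicitly state and justify the step $p(\theta \mid \mathcal{D}_t, u_t) = b_t(\theta)$ (conditional independence of $\theta$ from the applied control given the history), which the paper performs silently when it rewrites $b(\theta \mid \xi_0,\dots,\xi_t)$ as $b(\theta \mid \xi_0,\dots,\xi_{t-1},x_t)$ in its final line; making that assumption explicit is a strengthening, not a deviation.
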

\begin{proof}
    Utilizing Bayes rule and the Markov property of system~\eqref{eq:stoch_sys}, we have 
    \begin{align*}
        b_{t+1}(\theta) &= b(\theta | \xi_{0}, \dots, \xi_{t}, x_{t+1}) \\
        b_{t+1}(\theta) &\propto b(x_{t+1} | \xi_{0}, \dots, \xi_{t}, \theta) b(\theta | \xi_{0}, \dots, \xi_{t}) \\
        b_{t+1}(\theta) &\propto f(x_{t+1} | x_{t}, u_{t}, \theta) b(\theta | \xi_{0}, \dots, \xi_{t-1}, x_{t})
    \end{align*}
    which is concisely given in a recursive form by \eqref{eq:bayesian_recursive}.
\end{proof}

Existing methods typically approximate $\pi^{\ast}_{t}(x_{t})$ using a certainty equivalence approach, which solves 
\begin{subequations} \label{prob:ce}
    \begin{align}
        & \min_{\mathrm{u}_{t:t+N-1|t}} \Expectation \left[\phi(x_{t+N|t}) + \sum_{k=t}^{t+N-1} \ell_{k}(x_{k|t}, u_{k|t}) \right], \\
        & \text{subject to} \nonumber\\
        & x_{t|t} = x_{t}, \\
        & x_{k+1|t} \sim f(x_{k|t}, u_{k|t}, \Expectation_{\theta \sim b_{t}}[\theta]),
    \end{align}
\end{subequations}
or a stochastic approach, which solves
\begin{subequations} \label{prob:stoch}
    \begin{align}
        & \min_{\mathrm{u}_{t:t+N-1|t}} \Expectation \left[\phi(x_{t+N|t}) + \sum_{k=t}^{t+N-1} \ell_{k}(x_{k|t}, u_{k|t}) \right], \\
        & \text{subject to} \nonumber\\
        & x_{t|t} = x_{t}, \quad \theta \sim b_{t}(\cdot), \label{const:stoch_dist} \\
        & x_{k+1|t} \sim f(x_{k|t}, u_{k|t}, \theta).
    \end{align}
\end{subequations}
However, neither of these methods account for the estimation process in the design of the control policy. Rather, they assume that $b_{k}(\cdot) = b_{t}(\cdot)$ for all $k = t, \dots, t+N-1$. 
As may be seen from \eqref{eq:bayesian_estimation}, not only is the assumption of a fixed time-invariant belief distribution incorrect, but the future belief distribution depends on the \emph{current control actions}. 
Therefore, there is a coupling between the control design and parameter estimation which ideally should be accounted for during optimization.
This is because the optimization problem incorporates counterfactual reasoning, enabling the autonomous vehicle to reason about how its actions will affect the likelihood of different outcomes of the actions of other vehicles.
This goes against the traditional ``predict-then-plan" paradigm, which is not suitable for highly interactive scenarios due to its static nature. 
In our formulation, rather than addressing them independently, prediction and planning are wholly coupled.

\section{Proposed Approach}

We wish to design an approximate control policy which reduces the optimality gap between Problems~\eqref{prob:ideal}~and~\eqref{prob:stoch} by accounting for the Bayesian estimation process in the design of the control policy $\pi^{\ast}_{t}(x_{t})$. 
Such a policy is referred to as a \emph{dual control} policy.

\subsection{Dual Control} \label{sec:proposed_dual_control}

Ideally, we would simply replace $b_{t}(\cdot)$ in \eqref{const:stoch_dist} with $b_{k}(\cdot)$ for $k = t, \dots, t+N-1$. 
However, as may be seen in \eqref{eq:bayesian_estimation}, $b_{k}(\cdot)$ depends on the unknown future realizations of $x_{k}$ for $k > t$.
Nonetheless, while the future state realizations may be unknown, we can still harness the information of the planned control actions.
Thus, we predict the future belief distributions by conditioning on the \emph{past observations} and \emph{planned actions}, as shown in Fig.~\ref{fig:active_learning}, so that the predicted belief distribution is given by
\begin{align}\label{eq:bayesian_approx}
    \hat{b}_{k+1|t}(\theta) &= b(\theta | \xi_{0:t}, u_{t+1:k}),
\end{align}
for $k = t, \dots, t+N-1$, where $\xi_{0:t} = \xi_{0}, \dots, \xi_{t}$.

\begin{theorem} \label{thm:predictive_belief}
    The predicted belief dynamics~\eqref{eq:bayesian_approx} are given by the recursive update
    \begin{align} \label{eq:bayesian_predictive}
        \hat{b}_{k+1|t}(\theta) \propto \hat{b}_{k|t}(\theta) \Expectation_{\bar{\theta} \sim b_{t}}[f(x_{k+1} | x_{t}, \mathrm{u}_{t:k}, \theta) | x_{t}, \mathrm{u}_{t:k}, \bar{\theta}],
    \end{align}
    where
    $\hat{b}_{t|t}(\theta) = b_{t}(\theta)$.
\end{theorem}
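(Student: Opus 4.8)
The plan is to prove the recursion \eqref{eq:bayesian_predictive} by induction on the prediction index $k$, mirroring the structure of Lemma~\ref{lem:recursive_bayesian} but with the essential new ingredient that the future state playing the role of ``evidence'' is itself unknown at time $t$ and must therefore be marginalized. The base case $k=t$ is immediate: since $\mathrm{u}_{t+1:t}$ is an empty sequence, the definition \eqref{eq:bayesian_approx} gives $\hat{b}_{t|t}(\theta) = b(\theta \mid \xi_{0:t}) = b_t(\theta)$, where the last equality uses that $u_t$ is exogenous (planned independently of $\theta$) and hence carries no information about $\theta$ beyond $x_t$.

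For the inductive step, I would start from $\hat{b}_{k+1|t}(\theta) = b(\theta \mid \xi_{0:t}, \mathrm{u}_{t+1:k})$ and make the predicted state $x_{k+1}$ explicit as a latent variable to be marginalized. Applying Bayes' rule to split off $x_{k+1}$ and invoking the Markov property of \eqref{eq:stoch_sys} factors the update into the previous predicted belief $\hat{b}_{k|t}(\theta)$ times the likelihood $f(x_{k+1}\mid x_t, \mathrm{u}_{t:k}, \theta)$ of the predicted state, written as a multi-step transition from the last \emph{truly observed} state $x_t$ since none of the intermediate states $x_{t+1},\dots,x_k$ have actually been realized. The crux is then the following step: because $x_{k+1}$ is unavailable at planning time, we cannot condition on a realized value but must instead take its expectation under the predictive law induced by executing the plan. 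I would generate this law by sampling a data-generating parameter $\bar{\theta} \sim b_t$ and propagating the dynamics from $x_t$ under $\mathrm{u}_{t:k}$; averaging the likelihood over this law produces exactly the nested expectation $\Expectation_{\bar{\theta} \sim b_t}[\,f(x_{k+1}\mid x_t, \mathrm{u}_{t:k}, \theta)\mid x_t, \mathrm{u}_{t:k}, \bar{\theta}\,]$ appearing in \eqref{eq:bayesian_predictive}. Absorbing the $\theta$-independent constants into $\propto$ then closes the induction.

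The main obstacle --- and the step deserving the most care --- is the treatment of the unknown future state, i.e., making precise the distinction between the inference variable $\theta$ and the counterfactual data-generating parameter $\bar{\theta}$. The subtlety is that the future trajectory must be generated under the \emph{fixed} current belief $b_t$ rather than under the evolving predicted belief $\hat{b}_{k|t}$: were one to use the self-consistent predictive distribution instead, the tower property of Bayesian updating would force the expected posterior to collapse back to $b_t$, predicting no information gain and defeating the purpose of dual control. I would therefore stress that \eqref{eq:bayesian_predictive} is a principled \emph{prediction} of the future belief, obtained by interchanging the expectation over $x_{k+1}$ with the normalization so that the prior is multiplied by the expected likelihood, and is not the exact future posterior (which is inaccessible, as it depends on the unrealized states). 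Secondary bookkeeping items are verifying that the expected-likelihood factor is a well-defined, integrable function of $\theta$ so the renormalization in \eqref{eq:bayesian_predictive} is legitimate, and keeping the multi-step conditioning on $x_t$ (rather than $x_k$) consistent throughout the induction.
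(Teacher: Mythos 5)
Your proposal follows essentially the same route as the paper's proof: both expand the posterior via Lemma~\ref{lem:recursive_bayesian} (Bayes' rule plus the Markov property of~\eqref{eq:stoch_sys}) and then handle the unrealized future evidence by marginalizing the likelihood over the predictive law generated by $\bar{\theta}\sim b_t$, with the normalization absorbed into the proportionality. Your inductive organization, your retention of the multi-step transition density conditioned on $x_t$, and your explicit observation that the expectation is interchanged with the normalization (so the prediction does not collapse back to $b_t$ by the tower property) are, if anything, more careful than the paper's one-shot expansion, which silently swaps a product with an integral over the shared intermediate states.
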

\begin{proof}
    From Lemma~\ref{lem:recursive_bayesian}, we have 
    \begin{align*}
        & b(\theta | \xi_{0}, \dots,\xi_{t}, \dots \xi_{k}, x_{k+1}) \propto b_{k}(\theta) f(x_{k+1} | x_{k}, u_{k}, \theta) \\
        & \propto b_{t}(\theta) f(x_{t+1} | x_{t}, u_{t}, \theta) \dots f(x_{k+1} | x_{k}, u_{k}, \theta),
    \end{align*}
    where the second line results from expanding the recursive definition of $b_{k}$.
    However, since the future states are not observed, we expand~\eqref{eq:bayesian_approx} using the predictive distribution marginalized over the unknown future observations given by
    \begin{align*}
        \hat{b}_{k+1|t}(\theta) & \propto b_{t}(\theta) \int \int \prod_{\ell=t}^{k} f(x_{\ell+1} | x_{\ell}, u_{\ell}, \theta) b_{t}(\theta) \mathrm{d}\theta \mathrm{d}x_{t+1} \\
        \hat{b}_{k+1|t}(\theta) & \propto b_{t}(\theta) \prod_{\ell=t}^{k} \int \int f(x_{\ell+1} | x_{\ell}, u_{\ell}, \theta) b_{t}(\theta) \mathrm{d}\theta \mathrm{d}x_{t+1}
    \end{align*}
    which is concisely expressed in a recursive form by~\eqref{eq:bayesian_predictive}.
\end{proof}

\begin{corollary} \label{cor:expected_belief}
    We can interpret Theorem~\ref{thm:predictive_belief} as
    \begin{align} \label{eq:expected_belief}
        \hat{b}_{k+1|t}(\theta) = \Expectation_{x_{t+1:k+1} \sim f(\cdot)}[b_{k+1}(\theta) | \xi_{0:t}, u_{t+1:k}],
    \end{align}
    for $k = t, \dots, t+N-1$,
    as an alternative to deriving~\eqref{eq:bayesian_predictive} from~\eqref{eq:bayesian_approx}.
\end{corollary}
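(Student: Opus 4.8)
The plan is to establish \eqref{eq:expected_belief} directly from the definition \eqref{eq:bayesian_approx} by treating the unobserved future states $x_{t+1:k+1}$ as latent variables and marginalizing over them, and then to note that the recursion \eqref{eq:bayesian_predictive} re-emerges as a consequence. The starting observation is that the predicted belief $\hat{b}_{k+1|t}(\theta) = b(\theta \mid \xi_{0:t}, u_{t+1:k})$ conditions only on the past observations and the planned controls, whereas the true posterior $b_{k+1}(\theta)$ of \eqref{eq:bayesian_estimation} additionally conditions on the realized states $x_{t+1:k+1}$. The two objects therefore differ precisely by whether those future states have been integrated out, which is exactly the content the corollary makes explicit.

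First I would reintroduce the future states via the law of total probability,
\[ b(\theta \mid \xi_{0:t}, u_{t+1:k}) = \int b(\theta \mid \xi_{0:t}, u_{t+1:k}, x_{t+1:k+1})\, b(x_{t+1:k+1} \mid \xi_{0:t}, u_{t+1:k})\, \mathrm{d}x_{t+1:k+1}. \]
Next I would identify the first factor in the integrand with $b_{k+1}(\theta)$: conditioning jointly on $\xi_{0:t}$, on the planned controls $u_{t+1:k}$, and on the realized states $x_{t+1:k+1}$ reconstitutes the complete observation history $\xi_{0:k}, x_{k+1}$ appearing in \eqref{eq:bayesian_estimation}. The second factor is the predictive distribution of the future trajectory induced by the dynamics $f$, so the integral is exactly the conditional expectation $\Expectation_{x_{t+1:k+1} \sim f(\cdot)}[b_{k+1}(\theta) \mid \xi_{0:t}, u_{t+1:k}]$, which is \eqref{eq:expected_belief}. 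This is simply the tower property phrased at the level of belief distributions: the predicted belief is the expectation of the not-yet-realized true posterior.

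To recover \eqref{eq:bayesian_predictive} from this expected-posterior viewpoint, I would factor the predictive distribution using the Markov structure of \eqref{eq:stoch_sys} into a product of transition densities and peel off the terminal factor, so that the trajectory expectation telescopes into $\hat{b}_{k|t}(\theta)$ times a single-step expectation of $f(x_{k+1} \mid x_k, u_k, \theta)$, reproducing the recursion and confirming the two derivation routes agree.

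The main obstacle I anticipate is keeping the two distinct roles of the parameter separate. The belief $\hat{b}_{k+1|t}(\theta)$ is a density in the \emph{candidate} parameter $\theta$, at which the likelihood $f(\cdot \mid \cdot, \theta)$ is evaluated, yet the future states over which we marginalize are themselves generated by the dynamics under the \emph{true} parameter $\bar{\theta}$, which is uncertain and distributed as $\bar{\theta} \sim b_t$. Making this explicit is what produces the nested expectation $\Expectation_{\bar{\theta} \sim b_t}[\,\cdot \mid x_t, \mathrm{u}_{t:k}, \bar{\theta}]$ in \eqref{eq:bayesian_predictive}. I would therefore write the predictive distribution of the states as itself marginalized over $\bar{\theta} \sim b_t$, and verify that the normalization implicit in the $\propto$ of \eqref{eq:bayesian_recursive} is preserved when the expectation and the proportionality are interchanged.
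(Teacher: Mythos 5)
Your proposal is correct, but it runs in the opposite direction from the paper's proof and supplies a step the paper leaves implicit. The paper takes \eqref{eq:expected_belief} as the starting point, unpacks $b_{k+1}(\theta)=b(\theta\mid\xi_{0:k},x_{k+1})$ inside the expectation via Bayes' rule and the Markov property (as in Lemma~\ref{lem:recursive_bayesian}), pulls the already-observed factors $b(\theta)\prod_{\ell=0}^{t-1}f(x_{\ell+1}\mid x_\ell,u_\ell,\theta)\propto b_t(\theta)$ outside the expectation, and then factors the remaining expectation of $\prod_{\ell=t}^{k}f(x_{\ell+1}\mid x_\ell,u_\ell,\theta)$ into a product of single-step expectations to recover \eqref{eq:bayesian_predictive}; it never directly relates \eqref{eq:expected_belief} to the definition \eqref{eq:bayesian_approx}. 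You instead start from \eqref{eq:bayesian_approx}, introduce the latent trajectory $x_{t+1:k+1}$ by the law of total probability, identify the inner conditional with $b_{k+1}(\theta)$, and thereby obtain \eqref{eq:expected_belief} as an \emph{exact} tower-property identity before telescoping back to the recursion. This buys something the paper's argument does not: a direct proof that the expected-posterior formula coincides with the marginalized definition, rather than merely showing both reduce to the same recursion. Your closing remarks about the dual role of $\theta$ versus $\bar{\theta}\sim b_t$ and about interchanging the expectation with the unnormalized $\propto$ are exactly the right cautions. One point to make explicit in a full write-up: the final telescoping step, $\Expectation\bigl[\prod_{\ell=t}^{k}f(x_{\ell+1}\mid x_\ell,u_\ell,\theta)\mid\cdot\bigr]\approx\prod_{\ell=t}^{k}\Expectation\bigl[f(x_{\ell+1}\mid x_\ell,u_\ell,\theta)\mid\cdot\bigr]$, is not an exact identity because the transition densities share the latent trajectory; the paper performs the same interchange without comment, and this is where the approximation in the predicted-belief recursion actually enters in both routes.
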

\begin{proof}
    From~\eqref{eq:expected_belief}, and applying~\eqref{eq:bayesian_estimation}, we have 
    \begin{align*}
        \hat{b}_{k+1|t}(\theta) = \Expectation[b(\theta | \xi_{0:k}, x_{k+1}) | \xi_{0:t}, u_{t+1:k}].
    \end{align*}
    Then, similar to the arguments of Lemma~\ref{lem:recursive_bayesian},
    \begin{align*}
        & \hat{b}_{k+1|t}(\theta) \propto \Expectation[b(\xi_{0}, \dots, \xi_{k}, x_{k+1} | \theta) b(\theta) | \xi_{0:t}, \mathrm{u}_{t+1:k}] \\
        & \propto \Expectation[b(\theta) \prod_{\ell=0}^{k} f(x_{\ell+1} | x_{\ell}, u_{\ell}, \theta) | \xi_{0:t}, \mathrm{u}_{t+1:k}] \\
        & \propto b(\theta) \prod_{\ell=0}^{t-1} f(x_{\ell+1} | x_{\ell}, u_{\ell}, \theta) \\
        &\quad \times \Expectation[\prod_{\ell=t}^{k} f(x_{\ell+1} | x_{\ell}, u_{\ell}, \theta) | \xi_{0:t}, \mathrm{u}_{t+1:k}] \\
        & \propto b_{t}(\theta) \Expectation[\prod_{\ell=t}^{k} f(x_{\ell+1} | x_{\ell}, u_{\ell}, \theta) | \xi_{0:t}, \mathrm{u}_{t+1:k}] \\
        & \propto b_{t}(\theta) \prod_{\ell=t}^{k}  \Expectation[f(x_{\ell+1} | x_{\ell}, u_{\ell}, \theta) | x_{t}, \mathrm{u}_{t+1:k}],
    \end{align*}
    which leads to~\eqref{eq:bayesian_predictive} from Theorem~\ref{thm:predictive_belief}, with $\hat{b}_{t|t}(\theta) = b_{t}(\theta)$.
\end{proof}

Using Theorem~\ref{thm:predictive_belief}, we approximate the solution of Problem~\eqref{prob:ideal} by solving the optimal control problem
\begin{subequations} \label{prob:dual}
    \begin{align}
        & \min_{\mathrm{u}_{t:t+N-1|t}} \Expectation \left[\phi(x_{t+N|t}) + \sum_{k=t}^{t+N-1} \ell_{k}(x_{k|t}, u_{k|t}) \right], \\
        & \text{subject to} \nonumber\\
        & x_{t|t} = x_{t}, \quad \theta \sim \hat{b}_{k|t}(\cdot), \label{const:dual_dist} \\
        & \hat{b}_{k+1|t}(\theta) = b(\theta | \xi_{0:t}, u_{t+1:k}), \\
        & x_{k+1|t} \sim f(x_{k|t}, u_{k|t}, \theta),
    \end{align}
\end{subequations}
where we have the following proposition.
\begin{proposition}[\cite{knaup2024active}]
    The solution to Problem~\eqref{prob:dual} is a causal control policy. That is, the optimal control applied at time $t$ only depends on information available at or before time $t$.
\end{proposition}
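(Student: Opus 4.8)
The plan is to show that every quantity entering Problem~\eqref{prob:dual} --- the objective, the initial condition, the belief propagation, and the predicted dynamics --- is a deterministic function of data measurable with respect to the information available at time $t$, namely the current state $x_t$ and the observation history $\xi_{0:t}$, together with the decision variables $\mathrm{u}_{t:t+N-1|t}$. It then follows that the minimizer is a map of $(x_t,\xi_{0:t})$ alone, and since the applied control is its first component $u^\ast_{t|t}$, the induced policy $\pi^\ast_t$ is causal.

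First I would examine the constraint set. The initial condition $x_{t|t}=x_t$ uses only the measured current state. The belief recursion is the predicted belief $\hat{b}_{k+1|t}$ of Theorem~\ref{thm:predictive_belief}; by its definition~\eqref{eq:bayesian_approx} it conditions on the past observations $\xi_{0:t}$ and on the planned actions $u_{t+1:k}$, which are decision variables rather than measurements. Crucially, Theorem~\ref{thm:predictive_belief} (equivalently Corollary~\ref{cor:expected_belief}) expresses the propagation entirely through an expectation under the current belief $b_t$, so the unknown future state realizations $x_{k+1}$ for $k \ge t$ are marginalized out. Hence $\hat{b}_{k+1|t}$ never depends on any future measurement; it is a function of $\xi_{0:t}$ and $\mathrm{u}_{t:k}$ only. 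The remaining constraint samples $\theta$ from $\hat{b}_{k|t}$ and propagates the predicted state through $f$, so the predicted trajectory $\mathrm{x}_{t:t+N|t}$ is likewise determined by $x_t$, $\mathrm{u}_{t:t+N-1|t}$, and the (already causal) beliefs.

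Next I would turn to the objective~\eqref{cost:ideal}. Because it is an expectation over $\theta$ and the process noise, it integrates out precisely the stochastic quantities not yet realized at time $t$, and the resulting value is a deterministic functional of $x_t$, $\xi_{0:t}$, and $\mathrm{u}_{t:t+N-1|t}$. Combining this with the previous paragraph, Problem~\eqref{prob:dual} is a parametric optimization whose data are fixed once $(x_t,\xi_{0:t})$ is given; its minimizer $\mathrm{u}^\ast_{t:t+N-1|t}$ is therefore a (in general set-valued) map of $(x_t,\xi_{0:t})$, and $\pi^\ast_t(x_t)=u^\ast_{t|t}$ depends only on information available at or before $t$.

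The main obstacle I anticipate is not the bookkeeping above but making precise the one place where causality could superficially fail: the apparent dependence of $\hat{b}_{k+1|t}$ on future states through $b_{k+1}$ in the interpretation~\eqref{eq:expected_belief}. I would therefore lean on Theorem~\ref{thm:predictive_belief}/Corollary~\ref{cor:expected_belief} to argue that this dependence is removed by the outer expectation $\Expectation_{x_{t+1:k+1}\sim f(\cdot)}$, so that the predictive belief is measurable with respect to $(\xi_{0:t},u_{t+1:k})$ and no anticipative information enters the formulation. For full rigor the remaining step would be a standard measurability/well-posedness argument (e.g., a measurable selection of the minimizer), which I would only sketch.
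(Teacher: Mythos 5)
The paper itself states this proposition without proof, importing it from the cited prior work \cite{knaup2024active}, so there is no in-paper argument to compare against. Your reconstruction is correct and identifies exactly the point on which causality hinges: the predicted belief $\hat{b}_{k+1|t}$ in \eqref{eq:bayesian_approx} is conditioned only on the realized history $\xi_{0:t}$ and on the \emph{planned} controls $u_{t+1:k}$ (which are decision variables, not measurements), while the unrealized future states are marginalized out by the expectation in Theorem~\ref{thm:predictive_belief}; consequently every datum of Problem~\eqref{prob:dual} is measurable with respect to the time-$t$ information, the expected cost is a deterministic functional of $(x_t,\xi_{0:t},\mathrm{u}_{t:t+N-1|t})$, and the minimizer --- hence the applied control $u^{\ast}_{t|t}$ --- is a map of information available at or before $t$. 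One small bookkeeping note: the objective you want to cite is the one in Problem~\eqref{prob:dual} (equivalently \eqref{eq:score_cost}) rather than \eqref{cost:ideal}, and the resulting policy is more precisely a function of the pair $(x_t, b_t)$ (state plus current belief) rather than of $x_t$ alone, but both of these are available at time $t$, so causality is unaffected. Your closing caveat about a measurable-selection argument for the (possibly set-valued) argmin is the right level of rigor for a fully formal statement, and is more care than the source reference takes.
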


\begin{figure}[th]
    \centering
    \includegraphics[width=0.99\linewidth, trim={2500 0 600 0}, clip]{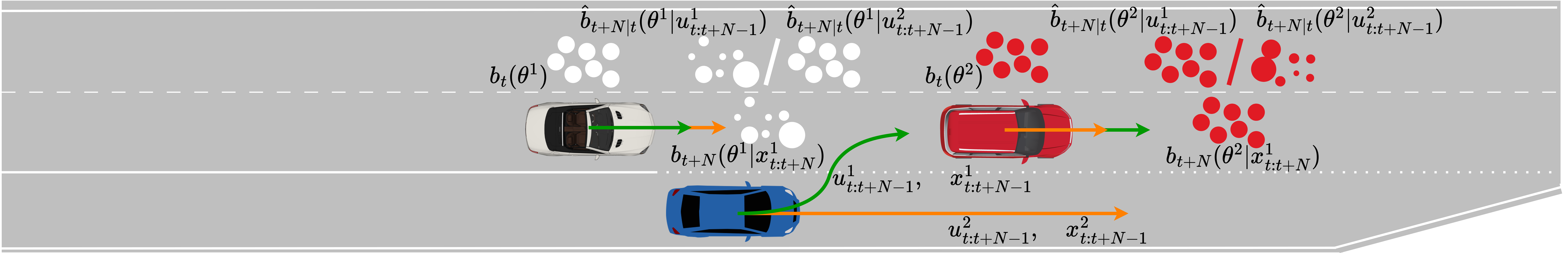}
    \caption{Belief prediction approach enabling active learning.}
    \label{fig:active_learning}
\end{figure}

\subsection{Model-based Diffusion} \label{sec:proposed_model_based_diffusion}

Define the cost corresponding to a solution of Problem~\eqref{prob:dual} as 
\begin{align}\label{eq:score_cost}
    & J_{\mathrm{DC}}(x_t, \mathrm{u}_{t:t+N-1}) \nonumber\\
    &= \Expectation_{\substack{\theta_{k} \sim \hat{b}_{k|t}(\cdot),\\ x_{k+1} \sim f(\cdot | x_{t}, \mathrm{u}_{t:k}, \theta_{k})}} \left[\phi(x_{t+N|t}) + \sum_{k=t}^{t+N-1} \ell_{k}(x_{k|t}, u_{k|t}) \right],
\end{align}
and let $u_{t:t+N-1|t}^{\ast} = \pi_{\mathrm{DC}}^{\ast}(x_{t})$ be the optimal solution of Problem~\eqref{prob:dual}, which minimizes~\eqref{eq:score_cost}, given $x_{t}$.
Problem~\eqref{prob:dual} is a causal stochastic optimal control problem that can be solved by a number of methods. 
However, in general, the cost function~\eqref{eq:score_cost} may be non-differentiable owing to 
penalty functions for collisions, for example.
Therefore, we opt to use a gradient-free sampling-based optimization scheme.

We define the probability distribution corresponding to~\eqref{eq:score_cost} as
\begin{align} \label{eq:score_dist}
    s^{0}(\mathrm{u}_{t:t+N-1} | x_{t}) \propto \exp{\left(-\frac{J_{\text{DC}}(x_t, \mathrm{u}_{t:t+N-1})}{\lambda} \right)},
\end{align}
where $s^{0}(\cdot)$ approaches a Dirac delta function as $\lambda \rightarrow 0$.
Equation~\eqref{eq:score_dist} may be interpreted as a probability distribution that puts highest probability on the optimal solution, $\pi_{\mathrm{DC}}^{\ast}(x_{t})$, of Problem~\eqref{prob:dual}.
Thus, solving problem~\eqref{prob:dual}, may alternatively be interpreted as sampling from \eqref{eq:score_dist} with small $\lambda$.

However, although \eqref{eq:score_dist} allows us to compute the probability density for a given solution, $\mathrm{u}_{t:t+N-1}$, it is not practical to directly sample from $s^0(\cdot | x_{t})$.
In order to generate approximate samples from \eqref{eq:score_dist}, we develop a novel multi-modal variant of the model-based diffusion algorithm presented in \cite{pan2024model}, tailored for the specific application of solving receding horizon problems.

\subsubsection{Generative Diffusion Models}
Diffusion addresses the problem in generative modeling of how to draw novel samples from a  distribution, $\tilde{y}^{0} \sim q^{0}(\cdot)$.
In the case of data-driven model-free diffusion, the distribution $q^{0}(\cdot)$ is unknown, but empirical samples from the unknown distribution, $\{y^{0, j}\}_{j=1}^{N_s} \sim q^{0}(y)$, are given.
In the case of analytical model-based diffusion, the probability density function of $q^{0}(\cdot)$ is available.
In either case, sampling from $q^{0}(\cdot)$ is intractable, so rather than sampling from $q^{0}(\cdot)$ directly, diffusion draws samples from a prior distribution $q^{N_d}$ and then maps them back to the desired distribution.

Several formulations of diffusion models exist, but the most general is that of stochastic differential equations (SDE).
In this case, the prior distribution is generated by corrupting the data samples with noise according to a hand-crafted SDE.
Let the transition dynamics from $q^{0}(\cdot)$ to $q^{N_d}(\cdot)$ be given by
\begin{align} \label{eq:discrete_forward_sde}
    y^{\tau+1} &= (I + \tilde{A}^{\tau}) y^{\tau} + B^{\tau} z^{\tau},
\end{align}
for $\tau = 0, 1, \dots, N_{d}-1$,
where $z^{\tau} \sim \mathcal{N}(0, I)$ 
and $y^{0} \sim q^{0}(\cdot)$.
The diffusion kernel parameters $\tilde{A}^{\tau}$ and $B^{\tau}$ are chosen by the designer to suite the needs of the application.
For example, the standard kernel used in denoising diffusion probabilistic models (DDPMs) is $\tilde{A}^{\tau} = \sqrt{1 - \beta_{\tau}^\mathrm{d}} I$, $B^{\tau} = \beta_{\tau}^{\mathrm{d}} I$, where $\beta_{\tau}^{\mathrm{d}} \in (0, 1)$, and the standard kernel used by Score-Based Generative Models (SGMs) is $\tilde{A}^{\tau} = I$, $B^{\tau} = \sigma_{\tau}^\mathrm{s} I$, where $\sigma_{\tau}^\mathrm{s} \in \Re$.

\begin{lemma} \label{lem:sde_ddpm_equivalence}
    The conditional distribution, $q^{\tau|0}(y^{\tau} | y^{0})$, resulting from the forward dynamics~\eqref{eq:discrete_forward_sde}, is given by
    \begin{align}
        q^{\tau|0}(y^{\tau} | y^{0}) = \mathcal{N}(\mu^{\tau}(y^0), \Sigma^{\tau}),
    \end{align}
    where
    \begin{subequations}
        \begin{align}
            \mu^{\tau+1}(y^0) &= \prod_{i=0}^{\tau} (I + \tilde{A}^{i}) y^{0},
        \end{align}
        which is equivalent to the dynamical system
        \begin{align}
            \mu^{\tau+1} &= (I + \tilde{A}^{\tau}) \mu^{\tau},
        \end{align}
        where $\mu^{0} = y^{0}$,
        and where
        \begin{align} \label{eq:sde_sigma_dynamics}
            \Sigma^{\tau+1} &= (I + \tilde{A}^{\tau}) \Sigma^{\tau} (I + \tilde{A}^{\tau})^{\top} + B^{\tau} B^{\tau^\top},
        \end{align}
        where $\Sigma_0 = 0$.
    \end{subequations}
\end{lemma}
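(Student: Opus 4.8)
The plan is to prove the claim by induction on the diffusion index $\tau$, exploiting the fact that the forward recursion~\eqref{eq:discrete_forward_sde} is affine in the state $y^{\tau}$ and in the independent Gaussian noise $z^{\tau}$. Since affine maps send Gaussians to Gaussians and the noise enters additively, the conditional law $q^{\tau|0}(\cdot \mid y^{0})$ remains within the Gaussian family at every step, so it suffices to track only its first two moments and verify that they obey the stated recursions.

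For the base case I would condition on $y^{0}$ and treat it as a fixed, deterministic value, so that the law of $y^{0}$ given $y^{0}$ is the degenerate Gaussian $\mathcal{N}(y^{0}, 0)$, matching $\mu^{0} = y^{0}$ and $\Sigma^{0} = 0$. For the inductive step I would assume $y^{\tau} \mid y^{0} \sim \mathcal{N}(\mu^{\tau}, \Sigma^{\tau})$ and push it through the affine update. The mean follows from linearity of expectation together with $\Expectation[z^{\tau}] = 0$, giving $\mu^{\tau+1} = (I + \tilde{A}^{\tau}) \mu^{\tau}$; unrolling this scalar-style recursion from $\mu^{0} = y^{0}$ yields the closed product form $\prod_{i=0}^{\tau} (I + \tilde{A}^{i}) y^{0}$. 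The covariance follows from the standard propagation rule for affine transformations of independent Gaussians, producing $\Sigma^{\tau+1} = (I + \tilde{A}^{\tau}) \Sigma^{\tau} (I + \tilde{A}^{\tau})^{\top} + B^{\tau} B^{\tau^{\top}}$, which is exactly~\eqref{eq:sde_sigma_dynamics}.

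The only point that genuinely needs to be spelled out is the vanishing of the cross-covariance between the terms $(I + \tilde{A}^{\tau}) y^{\tau}$ and $B^{\tau} z^{\tau}$ in the covariance computation. This holds because $y^{\tau}$ is a measurable function of $y^{0}$ and the earlier noise terms $z^{0}, \dots, z^{\tau-1}$ alone, while $z^{\tau}$ is drawn independently of all of these; conditioning on $y^{0}$ preserves this independence, so the mixed term drops out. I expect this independence bookkeeping, rather than any substantive computation, to be the main thing to assert carefully. Once it is in place, both moment recursions close in precisely the claimed form and the Gaussianity established in the inductive hypothesis propagates to index $\tau+1$, completing the induction.
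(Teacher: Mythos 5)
Your induction on $\tau$, propagating the mean and covariance through the affine update and using the independence of $z^{\tau}$ from $y^{0}$ and the earlier noise to drop the cross term, is correct and is precisely the routine computation the paper invokes when it states the result follows "by computing the conditional distributions using the linear dynamics" and omits the details. Your version simply writes out what the authors left implicit, so there is nothing to reconcile.
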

\begin{proof}
    The result is easy to show by computing the conditional distributions using the linear dynamics~\eqref{eq:discrete_forward_sde}, and so the proof is omitted.
\end{proof}
\begin{remark}
    In practice, the forward process is often constructed such that $q^{N_d}(\cdot) = q(y^{N_d} | y^{0}) \rightarrow \mathcal{N}(0, I)$ as $N_d \rightarrow \infty$, for all $y^{0} \sim q^{0}(\cdot)$.
    The approximation $q^{N_d}(\cdot) \approx \mathcal{N}(0, I)$ then allows for sampling from a fixed prior distribution at inference time.
\end{remark}


Samples from the prior distribution are ``denoised" back to the data distribution by reversing the SDE.
\begin{proposition} [\cite{song2020score, yang2023diffusion}] \label{prop:reverse_sde}
%
    The forward diffusion process given by~\eqref{eq:discrete_forward_sde}
    may be reversed by the discrete SDE given by
    \begin{align}
        y^{\tau-1} = (I - \tilde{A}^{\tau}) y^{\tau} + B^{\tau} B^{{\tau}^\top} \nabla_{y} \log{q^{\tau}(y^{\tau})} + B^{\tau} z^{\tau},
    \end{align}
    or alternatively by the ODE
    \begin{align}
        y^{\tau-1} = (I - \tilde{A}^{\tau}) y^{\tau} + \frac{1}{2} B^{\tau} B^{{\tau}^\top} \nabla_{y} \log{q^{\tau}(y^{\tau})}.
    \end{align}
\end{proposition}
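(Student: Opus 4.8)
The plan is to derive the reverse transition kernel $q(y^{\tau-1} \mid y^{\tau})$ directly and show that, to leading order in the step size, it is Gaussian with the mean and covariance implied by the stated update. First I would apply Bayes' rule together with the Markov property of the forward chain to write
\[
q(y^{\tau-1} \mid y^{\tau}) \propto q(y^{\tau} \mid y^{\tau-1})\, q^{\tau-1}(y^{\tau-1}),
\]
where the normalizing factor $q^{\tau}(y^{\tau})$ is constant in $y^{\tau-1}$. From \eqref{eq:discrete_forward_sde}, $q(y^{\tau} \mid y^{\tau-1})$ is Gaussian with mean $(I + \tilde{A}^{\tau-1}) y^{\tau-1}$ and covariance $B^{\tau-1} B^{\tau-1\top}$, so its log-density is a known quadratic in $y^{\tau-1}$.

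Second, I would handle the marginal factor $q^{\tau-1}(y^{\tau-1})$ by Taylor-expanding its logarithm about $y^{\tau}$ to first order,
\[
\log q^{\tau-1}(y^{\tau-1}) \approx \log q^{\tau-1}(y^{\tau}) + (y^{\tau-1} - y^{\tau})^{\top} \nabla_{y} \log q^{\tau-1}(y^{\tau}).
\]
In the small-step regime, where $\tilde{A}^{\tau}$ and $B^{\tau} B^{\tau\top}$ are $O(\Delta\tau)$, the marginals vary slowly, so $\nabla_{y} \log q^{\tau-1} \approx \nabla_{y} \log q^{\tau}$ and the second-order curvature term is subdominant. Adding this linear term to the quadratic from the forward kernel and completing the square identifies $q(y^{\tau-1} \mid y^{\tau})$ as Gaussian; reading off its mean gives the deterministic part $(I - \tilde{A}^{\tau}) y^{\tau} + B^{\tau} B^{\tau\top} \nabla_{y} \log q^{\tau}(y^{\tau})$ and its covariance gives the noise term $B^{\tau} z^{\tau}$, which is exactly the stated reverse SDE. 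The crucial bookkeeping is that the forward precision $(B^{\tau-1} B^{\tau-1\top})^{-1}$ cancels against the covariance weighting of the score, leaving the clean factor $B^{\tau} B^{\tau\top}$ multiplying $\nabla_{y} \log q^{\tau}$, and that $(I + \tilde{A}^{\tau-1})^{-1} \approx I - \tilde{A}^{\tau}$ to first order supplies the sign flip in the drift.

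Third, for the probability-flow ODE I would invoke the discrete analogue of the Fokker--Planck continuity argument: the reverse SDE induces the marginals $q^{\tau}$, and I would seek a deterministic map sharing those same marginals. Writing the evolution of $q^{\tau}$ and re-expressing the diffusion contribution as an equivalent drift shows that deleting the stochastic term while halving the score coefficient preserves the marginals, which accounts for the factor $\tfrac{1}{2}$ and yields the stated ODE. The main obstacle I anticipate is making the small-step Taylor expansion rigorous---in particular, controlling the discarded higher-order terms and justifying the replacement $q^{\tau-1} \approx q^{\tau}$ inside the score---so I would either impose the customary smoothness and small-step assumptions or, more cleanly, identify \eqref{eq:discrete_forward_sde} as an Euler--Maruyama discretization of a continuous-time SDE and apply Anderson's time-reversal theorem (as in \cite{song2020score}) before re-discretizing.
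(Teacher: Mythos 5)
The paper offers no proof of this proposition at all: it is imported verbatim from the cited references (\cite{song2020score, yang2023diffusion}), so there is nothing internal to compare against. Your sketch is the standard derivation from those sources and is essentially correct. The Bayes-rule decomposition $q(y^{\tau-1}\mid y^{\tau}) \propto q(y^{\tau}\mid y^{\tau-1})\,q^{\tau-1}(y^{\tau-1})$, the first-order Taylor expansion of $\log q^{\tau-1}$ about $y^{\tau}$, and the completion of the square do yield a Gaussian with mean $(I+\tilde{A})^{-1}y^{\tau} + (I+\tilde{A})^{-1}B B^{\top}(I+\tilde{A})^{-\top}\nabla_y\log q^{\tau}(y^{\tau})$ and covariance $(I+\tilde{A})^{-1}BB^{\top}(I+\tilde{A})^{-\top}$, which reduce to the stated drift and noise once you drop $O(\Delta\tau^2)$ terms; your observation that $(I+\tilde{A})^{-1}\approx I-\tilde{A}$ supplies the sign flip is the right bookkeeping. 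Two points deserve to be made explicit rather than left implicit. First, there is an index mismatch: the forward step from $\tau-1$ to $\tau$ uses $\tilde{A}^{\tau-1}, B^{\tau-1}$, while the stated reverse update uses $\tilde{A}^{\tau}, B^{\tau}$; this is only consistent at leading order in the step size, the same order at which the whole Gaussian approximation of the reverse kernel holds, so it should be folded into the same error budget. Second, your phrasing that the forward precision ``cancels against the covariance weighting of the score'' is slightly off --- the score enters as a bare linear term in the exponent, and the relevant cancellation is that the posterior covariance $M^{-1}=(I+\tilde{A})^{-1}BB^{\top}(I+\tilde{A})^{-\top}$ multiplying it collapses to $BB^{\top}$ at leading order. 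Your fallback of identifying \eqref{eq:discrete_forward_sde} as an Euler--Maruyama discretization and invoking Anderson's time-reversal theorem, together with the Fokker--Planck argument for the factor $\tfrac12$ in the probability-flow ODE, is exactly how the cited works make this rigorous, and is the cleaner route if you want more than a leading-order statement.
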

In general, the desired data distribution may be unknown, in which case $\nabla_{y} \log{q(y^{\tau})}$ must be learned from data.
However, in the case where the desired distribution is known (even if it cannot be sampled from), it is possible to compute $\nabla_{y} \log{q(y^{\tau})}$ directly, bypassing the training process.
\begin{proposition}[\cite{pan2024model}] \label{prop:score_function}
    The score function $\nabla_{y^{\tau}} \log{q^{\tau}(y^{\tau})}$ may be computed explicitly using $q^{0}(y^{0})$,
    \begin{align} \label{eq:score_function}
        &\nabla_{y} \log{q^{\tau}(y^{\tau})} \\
        & = - \Sigma^{\tau^{-1}} y^{\tau} + \Sigma^{\tau^{-1}} \frac{\int \mu^{\tau}(y^0) \mathcal{N}(y^{\tau} | \mu^{\tau}(y^0), \Sigma^{\tau}) q(y^{0}) dy^{0}} {\int \mathcal{N}(y^{\tau} | \mu^{\tau}(y^0), \Sigma^{\tau}) q(y^{0}) dy^{0}}, \nonumber
    \end{align}
    which may be approximated using importance sampling as
    \begin{align}\label{eq:is_score_function}
        &\nabla_{y} \log{q^{\tau}(y^{\tau})}
        \approx - \Sigma^{\tau^{-1}} y^{\tau} + \Sigma^{\tau^{-1}} \frac{\sum_{j=1}^{N_s} \mu^{\tau}(y^{0, j}) q(y^{0, j})} {\sum_{j=1}^{N_s} q(y^{0, j})},
    \end{align}
    where $y^{0, j} \sim \mathcal{N}(\bar{A}^{\tau^{-1}} y^{\tau, j}, \bar{A}^{\tau^\top} \Sigma^{\tau^{-1}} \bar{A}^{\tau})$ and $\bar{A}^{\tau} = \prod_{i=0}^{\tau} (I + \tilde{A}^{i})$, for all $j = 1, \dots, N_s$ and $i = N_d, \dots, 1$.
\end{proposition}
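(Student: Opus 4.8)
The plan is to obtain the exact score formula by direct differentiation of the marginal density $q^{\tau}$ under the integral sign, and then to reduce the resulting posterior-mean expression to a self-normalized importance-sampling estimate. First I would write the marginal as the mixture $q^{\tau}(y^{\tau}) = \int \mathcal{N}(y^{\tau} \mid \mu^{\tau}(y^0), \Sigma^{\tau}) \, q^{0}(y^{0}) \, dy^{0}$, which follows immediately from Lemma~\ref{lem:sde_ddpm_equivalence}, since the forward kernel $q^{\tau|0}(\cdot \mid y^{0})$ is exactly $\mathcal{N}(\mu^{\tau}(y^0), \Sigma^{\tau})$. Using the standard Gaussian identity $\nabla_{y} \mathcal{N}(y \mid m, \Sigma) = -\Sigma^{-1}(y - m)\,\mathcal{N}(y \mid m, \Sigma)$ and differentiating under the integral sign (justified by dominated convergence, as the Gaussian and its gradient are uniformly dominated on a compact neighborhood of $y^{\tau}$), I obtain $\nabla_{y} q^{\tau}(y^{\tau}) = \int (-\Sigma^{\tau^{-1}})(y^{\tau} - \mu^{\tau}(y^0))\, \mathcal{N}(\cdot)\, q^{0}(y^0)\, dy^{0}$. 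Dividing by $q^{\tau}(y^{\tau})$ and pulling the $y^{0}$-independent term $-\Sigma^{\tau^{-1}} y^{\tau}$ out of the ratio (its companion integrals cancel to unity) yields exactly~\eqref{eq:score_function}; the surviving ratio is the Bayes posterior mean $\Expectation[\mu^{\tau}(y^0) \mid y^{\tau}]$ under the posterior $\propto \mathcal{N}(y^{\tau} \mid \mu^{\tau}(y^0), \Sigma^{\tau})\, q^{0}(y^{0})$, premultiplied by $\Sigma^{\tau^{-1}}$.

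For the importance-sampling approximation~\eqref{eq:is_score_function}, the key step is to take the Gaussian likelihood, viewed as a function of $y^{0}$, as the proposal. Since $\mu^{\tau}(y^0) = \bar{A}^{\tau} y^{0}$ is linear in $y^0$ by Lemma~\ref{lem:sde_ddpm_equivalence}, I would complete the square in the exponent of $\mathcal{N}(y^{\tau} \mid \bar{A}^{\tau} y^{0}, \Sigma^{\tau})$ with respect to $y^{0}$; this exhibits it, up to a $y^{0}$-independent normalizer, as a Gaussian in $y^{0}$ with mean $\bar{A}^{\tau^{-1}} y^{\tau}$ and precision $\bar{A}^{\tau^{\top}} \Sigma^{\tau^{-1}} \bar{A}^{\tau}$, matching the proposal stated in the claim. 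Drawing $y^{0,j}$ from this proposal and applying self-normalized importance sampling to both the numerator and denominator integrals — so that the shared, $y^{0}$-independent Gaussian normalizing constant cancels — leaves the weights equal to $q^{0}(y^{0,j})$ and produces precisely the ratio estimate in~\eqref{eq:is_score_function}.

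The main obstacle I anticipate is bookkeeping rather than anything conceptual: correctly completing the square to pin down the proposal's mean and covariance, and reconciling the index conventions for $\bar{A}^{\tau} = \prod_{i=0}^{\tau} (I + \tilde{A}^{i})$ with the definition of $\mu^{\tau}$ inherited from Lemma~\ref{lem:sde_ddpm_equivalence}. In particular, one must note that the quantity written as the proposal ``covariance'' is actually the precision matrix $\bar{A}^{\tau^{\top}} \Sigma^{\tau^{-1}} \bar{A}^{\tau}$, whose inverse $\bar{A}^{\tau^{-1}} \Sigma^{\tau} \bar{A}^{\tau^{-\top}}$ is the true covariance. A secondary point worth stating explicitly is that self-normalized importance sampling is consistent but not unbiased, so~\eqref{eq:is_score_function} is an asymptotically exact estimator of the exact score~\eqref{eq:score_function} as $N_s \to \infty$, with bias of order $O(1/N_s)$. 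Since the result is quoted from~\cite{pan2024model}, I would keep the argument at the level of these calculations rather than re-deriving the underlying diffusion machinery.
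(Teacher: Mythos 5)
Your argument is essentially the paper's own proof: both express $q^{\tau}$ as the mixture $\int \mathcal{N}(y^{\tau}\mid \mu^{\tau}(y^0),\Sigma^{\tau})\,q^{0}(y^0)\,dy^0$ via Lemma~\ref{lem:sde_ddpm_equivalence}, differentiate under the integral using the Gaussian gradient identity, split off the $-\Sigma^{\tau^{-1}}y^{\tau}$ term, and then rewrite the kernel as a Gaussian in $y^{0}$ to obtain the self-normalized importance-sampling estimate with weights $q^{0}(y^{0,j})$. Your added observations --- that the expression $\bar{A}^{\tau^{\top}}\Sigma^{\tau^{-1}}\bar{A}^{\tau}$ appearing as the proposal ``covariance'' is in fact the precision matrix, and that the estimator is consistent but only asymptotically unbiased --- are correct refinements the paper does not state, but they do not change the route of the proof.
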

\begin{proof}
    The result may be derived as follows
    \begin{subequations}
    \begin{align}
        & \nabla_{y^{\tau}} \log{q^{\tau}(y^{\tau})}
        = \frac{\nabla_{y^{\tau}} q^{\tau}(y^{\tau})} {q^{\tau}(y^{\tau})} \label{eq:score_func_proof_chain_rule} \\
        & = \frac{\nabla_{y^{\tau}} \int q(y^{\tau}|y^{0}) q(y^{0}) dy^{0}} {\int q(y^{\tau}|y^{0}) q(y^{0}) dy^{0}} \label{eq:score_func_proof_bayes} \\
        & = \frac{\int \nabla_{y^{\tau}} q(y^{\tau}|y^{0}) q(y^{0}) dy^{0}} {\int q(y^{\tau}|y^{0}) q(y^{0}) dy^{0}} \label{eq:score_func_proof_order_operations} \\
        & = \frac{\int \nabla_{y^{\tau}} \mathcal{N}(y^{\tau} | \mu^{\tau}(y^0), \Sigma^{\tau}) q(y^{0}) dy^{0}} {\int \mathcal{N}(y^{\tau} | \mu^{\tau}(y^0), \Sigma^{\tau}) q(y^{0}) dy^{0}} \label{eq:score_func_proof_lemma} \\
        & = \frac{\int - \Sigma^{\tau^{-1}} (y^{\tau} - \mu^{\tau}(y^0)) \mathcal{N}(y^{\tau} | \mu^{\tau}(y^0), \Sigma^{\tau}) q(y^{0}) dy^{0}} {\int \mathcal{N}(y^{\tau} | \mu^{\tau}(y^0), \Sigma^{\tau}) q(y^{0}) dy^{0}} \label{eq:score_func_proof_grad_gaussian} \\
        & = - \Sigma^{\tau^{-1}} y^{\tau} \frac{\int \mathcal{N}(y^{\tau} | \mu^{\tau}(y^0), \Sigma^{\tau}) q(y^{0}) dy^{0}} {\int \mathcal{N}(y^{\tau} | \mu^{\tau}(y^0), \Sigma^{\tau}) q(y^{0}) dy^{0}} \nonumber \\
        & \quad + \frac{\int \Sigma^{\tau^{-1}} \mu^{\tau}(y^0) \mathcal{N}(y^{\tau} | \mu^{\tau}(y^0), \Sigma^{\tau}) q(y^{0}) dy^{0}} {\int \mathcal{N}(y^{\tau} | \mu^{\tau}(y^0), \Sigma^{\tau}) q(y^{0}) dy^{0}}, \label{eq:score_func_proof_expand}
    \end{align}
    \end{subequations}
    where in~\eqref{eq:score_func_proof_chain_rule} we apply the chain rule of derivation, 
    in~\eqref{eq:score_func_proof_bayes} we use Bayes' theorem to express $q^{\tau}(y^{\tau})$, 
    in~\eqref{eq:score_func_proof_order_operations} we move the derivative inside the integral since the gradient is being taken with respect to $y^{\tau}$ whereas the variable of integration is $y^{0}$,
    in~\eqref{eq:score_func_proof_lemma} we apply Lemma~\ref{lem:sde_ddpm_equivalence},
    in~\eqref{eq:score_func_proof_grad_gaussian} we take the gradient of the Gaussian distribution $\mathcal{N}(y^{\tau} | \mu^{\tau}(y^0), \Sigma^{\tau}) \propto \exp{(-\frac{1}{2} (y^{\tau} - \mu^{\tau}(y^0))^\top \Sigma^{i^{-1}} (y^{\tau} - \mu^{\tau}(y^0)))}$,
    and in~\eqref{eq:score_func_proof_expand} we expand and pull coefficients outside the integral,
    and then we simplify to obtain the result.
    The importance sampling approximation is obtained by reparameterizing 
    \begin{align*}
        & \mathcal{N}(y^{\tau} | \mu^{\tau}(y^0), \Sigma^{\tau}) \\
        &\propto \exp{(-\frac{1}{2} (y^{\tau} - \mu^{\tau}(y^0))^\top \Sigma^{\tau^{-1}} (y^{\tau} - \mu^{\tau}(y^0)))} \\
        & \propto \exp{(-\frac{1}{2} (\bar{A}^{\tau^{-1}}y^{\tau} - y^0)^\top \bar{A}^{\tau^\top} \Sigma^{\tau^{-1}} \bar{A}^{\tau} (\bar{A}^{\tau^{-1}} y^{\tau} - y^0))}
    \end{align*} 
    so as to sample $y^{0}$, rather than $y^{\tau}$, since $y^{0}$ is the variable of integration.
\end{proof}

\subsubsection{Model Predictive Diffusion}

\begin{figure*}[th]
    \centering
    \includegraphics[width=0.99\linewidth]{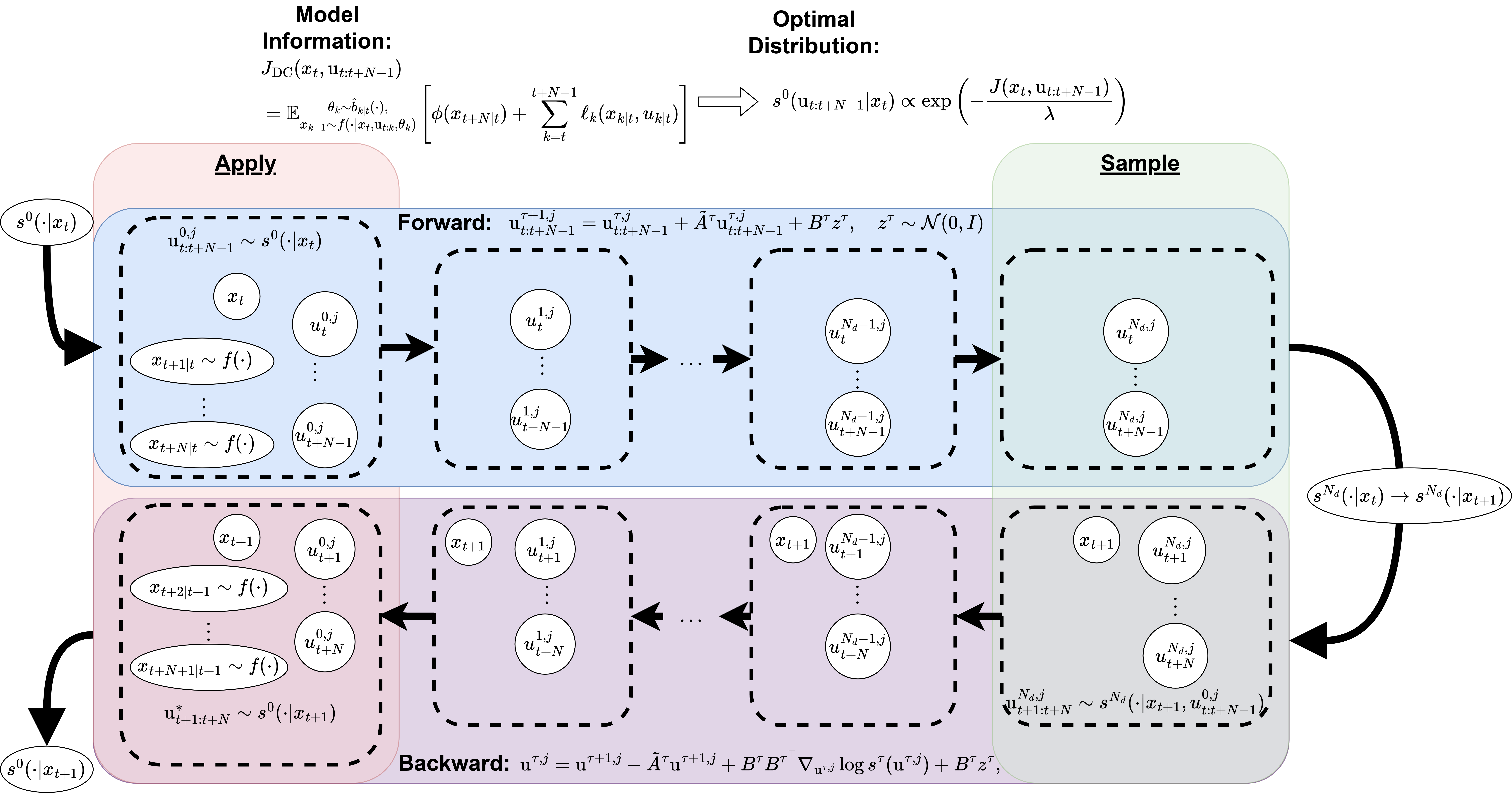}
    \caption{\textbf{Model Predictive Diffusion:} In the forward pass, $N_m$ samples from the previous time-step's optimal distribution $s^{0}(\cdot | x_{t})$ are corrupted with noise to form the dynamic prior distribution for the current time-step. In the backward pass, samples are drawn from the prior distribution $s^{N_d}(\cdot | x_{t+1})$ and are denoised to construct samples from  the updated optimal distribution $s^{0}(\cdot | x_{t+1})$. The samples $u_{t:t+N-1}^{\tau, j}$ are indexed by their step in the diffusion process $\tau$, their sample index (also referred to as mode of the prior distribution which is a mixture of $N_d$ Gaussians) $j$, and their time-step in real time $t$.}
    \label{fig:generative_diffusion}
\end{figure*}

Although the model-based diffusion algorithm is able to effectively solve optimization problems by generating samples from the optimal distribution, it has several limitations that hinder its application to model predictive control.
First, the iterative denoising process is computationally intensive as many denoising steps must be performed sequentially in order to move the samples towards the optimal value, and reducing the number of steps may lead to sub-optimal solutions.
Second, model-based diffusion assumes a fixed, unimodal Gaussian prior which does not allow for leveraging prior knowledge about the specific problem being solved.
Third, model-based diffusion does not exploit the online solution structure of model predictive control; rather, every time the problem must be solved, all previous information about the previous optimal solution is discarded.

We address these deficiencies by proposing a novel variant of model-based diffusion, tailored to the special structure of model predictive control.
Our proposed architecture reduces the number of steps required for the compute-intensive iterative denoising process by \emph{learning a multimodal dynamic prior distribution} so that samples may be drawn closer to the optimal distribution.
The multimodal dynamic prior is constructed by bootstrapping with multiple approximate local-optimal solutions to the MPC problem from the previous time-step, which we refer to as modes.
The modes are corrupted with noise to construct the prior as in regular diffusion; however, the forward diffusion process is truncated so that the noisy modes remain distinct, resulting in a multimodal prior which is dependent on the previous solution.
This key insight reduces the computational burden of the iterative denoising process, which must be performed sequentially, by setting up the prior as a close approximation of the new optimal distribution at the current time-step, assuming the solutions do not change rapidly between sampling times, a common assumption in the MPC literature.

Consider $N_m$ samples drawn from the optimal distribution $s^{0}(\cdot | x_{t})$, given by $\{\mathrm{u}_{t:t+N-1}^{0, j} \}_{j=1}^{N_m}$. 
These may be converted to noise following the forward diffusion process given by
\begin{align} \label{eq:proposed_markov_diffusion}
    &\mathrm{u}_{t:t+N-1}^{\tau+1, j} = (I + \tilde{A}^{\tau}) \mathrm{u}_{t:t+N-1}^{\tau, j} + B^{\tau} z^{\tau, j},
\end{align}
for $\tau = 0, \dots, N_d-1$,
where $z^{\tau, j} \sim \mathcal{N}(0, I)$,
resulting in 
\begin{align} \label{eq:proposed_forward_diffusion}
    &\mathrm{u}_{t:t+N-1}^{\tau+1, j} \sim 
    s^{\tau+1}(\cdot | \mathrm{u}_{t:t+N-1}^{0, j}, x_{t}) = \mathcal{N}(\cdot | \bar{A}^{\tau} \mathrm{u}_{t:t+N-1}^{0, j}, \Sigma^{\tau}),
\end{align}
for $j = 1, \dots, N_m$,
where $\bar{A}^{\tau} = \prod_{i=0}^{\tau} (I + \tilde{A}^{i})$, and $\Sigma^{\tau}$ is given in \eqref{eq:sde_sigma_dynamics}.
Control sequences $u_{t:t+N-1}^{\tau, j}$ are indexed by their step in the diffusion process $\tau$, their mode index of the prior distribution which is a mixture of $N_d$ Gaussians $j$, and their time-step in real time $t$.
Samples from the priors may then be denoised to generate samples from the optimal distribution $s^{0}(\cdot | x_{t})$ using the following result.

\begin{theorem} \label{thm:proposed_denoising}
    The diffusion process given by~\eqref{eq:proposed_markov_diffusion}, or, equivalently,~\eqref{eq:proposed_forward_diffusion}, may be reversed by the discrete SDE
    \begin{align} \label{eq:proposed_denoising_sde}
        \mathrm{u}_{t:t+N-1}^{\tau-1, j} &= (I - \tilde{A}^{\tau}) \mathrm{u}_{t:t+N-1}^{\tau, j} \nonumber\\
        &+ B^{\tau} B^{{\tau}^\top} \nabla_{\mathrm{u}_{t:t+N-1}^{\tau, j}} \log s^{\tau}(\mathrm{u}_{t:t+N-1}^{\tau, j}) + B^{\tau} z^{\tau, j},
    \end{align}
    or by the discrete ODE
    \begin{align} \label{eq:proposed_denoising_ode}
        \mathrm{u}_{t:t+N-1}^{\tau-1, j} &= (I - \tilde{A}^{\tau}) \mathrm{u}_{t:t+N-1}^{\tau, j} \nonumber\\
        &+ \frac{1}{2} B^{\tau} B^{{\tau}^\top} \nabla_{\mathrm{u}_{t:t+N-1}^{\tau, j}} \log s^{\tau}(\mathrm{u}_{t:t+N-1}^{\tau, j}).
    \end{align}
\end{theorem}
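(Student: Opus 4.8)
The plan is to recognize that the proposed forward recursion~\eqref{eq:proposed_markov_diffusion} is, for each fixed mode index $j$, simply an instance of the generic linear--Gaussian forward process~\eqref{eq:discrete_forward_sde}, and then to invoke the reversal result of Proposition~\ref{prop:reverse_sde} essentially verbatim. The only real work is to confirm the identification of the quantities involved and to argue that the multimodal, mode-indexed construction of the prior does not alter the \emph{form} of the reverse dynamics, only the value of the quantities entering it.

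First I would make the identification $y^{\tau} \equiv \mathrm{u}_{t:t+N-1}^{\tau, j}$, take the initial distribution to be $q^{0}(\cdot) \equiv s^{0}(\cdot \mid x_{t})$, and match the kernel parameters $\tilde{A}^{\tau}, B^{\tau}$ and the driving noise $z^{\tau, j} \sim \mathcal{N}(0, I)$. Under this identification~\eqref{eq:proposed_markov_diffusion} is literally~\eqref{eq:discrete_forward_sde}. Next I would verify consistency of the marginals: by Lemma~\ref{lem:sde_ddpm_equivalence} the conditional law $q^{\tau \mid 0}$ is Gaussian with the mean and covariance stated there, which matches the conditional distribution asserted in~\eqref{eq:proposed_forward_diffusion}; marginalizing this conditional over $y^{0} \sim s^{0}(\cdot \mid x_{t})$ produces the intermediate marginals $s^{\tau}(\cdot \mid x_{t})$ that play exactly the role of $q^{\tau}$ in the general framework.

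With these identifications in place, I would apply Proposition~\ref{prop:reverse_sde} directly. Since that proposition holds for an \emph{arbitrary} initial distribution $q^{0}$ and expresses the reverse SDE and ODE solely through the marginal score $\nabla_{y} \log q^{\tau}(y^{\tau})$, substituting $q^{\tau} \mapsto s^{\tau}$ and $y^{\tau} \mapsto \mathrm{u}_{t:t+N-1}^{\tau, j}$ immediately reproduces~\eqref{eq:proposed_denoising_sde} and~\eqref{eq:proposed_denoising_ode}. No new computation is required beyond this substitution.

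The one point I expect to require care--and which I would state explicitly--is that the reverse process is driven by the score of the \emph{full} marginal $s^{\tau}$ (a mixture over all modes), and not by any per-mode conditional score. Although each sample $j$ is initialized from its own noised mode and advanced by its own noise realization $z^{\tau, j}$, the denoising increment in~\eqref{eq:proposed_denoising_sde}--\eqref{eq:proposed_denoising_ode} shares the common marginal score $\nabla \log s^{\tau}$, which is precisely what pulls the samples back toward the high-probability regions of $s^{0}(\cdot \mid x_{t})$. Thus the multimodality of the truncated prior affects only the \emph{value} of the score--evaluated in practice via the importance-sampling estimator of Proposition~\ref{prop:score_function}--while leaving the \emph{form} of the reversal untouched, so the theorem follows from Proposition~\ref{prop:reverse_sde} without modification.
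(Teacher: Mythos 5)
Your proposal is correct and follows exactly the paper's own route: the paper likewise notes that the equivalence of~\eqref{eq:proposed_markov_diffusion} and~\eqref{eq:proposed_forward_diffusion} follows from Lemma~\ref{lem:sde_ddpm_equivalence} and that the reverse SDE and ODE then follow immediately from Proposition~\ref{prop:reverse_sde}. Your added clarification that the reversal is driven by the score of the full marginal $s^{\tau}$ rather than a per-mode score is a useful explicit remark, but it does not change the argument.
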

\begin{proof}
    The equivalence of~\eqref{eq:proposed_markov_diffusion}~and~\eqref{eq:proposed_forward_diffusion} may easily be seen from Lemma~\ref{lem:sde_ddpm_equivalence}.
    The results~\eqref{eq:proposed_denoising_sde}~and~\eqref{eq:proposed_denoising_ode} follow immediately from Proposition~\ref{prop:reverse_sde}.
\end{proof}

This gives rise to the proposed generative diffusion model.
Rather than allowing the diffusion process to proceed for enough steps to reach a fixed prior which is unconditional of the samples from the optimal distribution, as is usually the case, we instead limit the number of steps to reduce computations, producing $N_m$ \emph{unique} prior distributions.
For each of these modes, the denoising process in Theorem~\ref{thm:proposed_denoising} may be evaluated using the following result.
\begin{theorem} \label{thm:score_function}
    The score function $\nabla_{\mathrm{u}_{t:t+N-1}^{\tau, j}} \log{s^{\tau}(\mathrm{u}_{t:t+N-1}^{\tau, j})}$ may be computed explicitly using $s^{0}(\cdot | x_{t})$,
    \begin{align} \label{eq:score_function_control}
        &\nabla_{\mathrm{u}_{t:t+N-1}^{\tau, j}} \log{s^{\tau}(\mathrm{u}_{t:t+N-1}^{\tau, j})} \\
        & = - \Sigma^{\tau^{-1}} \mathrm{u}_{t:t+N-1}^{\tau, j} \nonumber\\
        &+ \Sigma^{\tau^{-1}} \frac{\int \bar{A}^{\tau} \mathrm{u}_{t:t+N-1}^{0, j} \mathcal{N}(\cdot | \bar{A}^{\tau} \mathrm{u}_{t:t+N-1}^{0, j}, \Sigma^{\tau}) s^{0}(\cdot) \mathrm{d}\mathrm{u}_{t:t+N-1}^{0, j}} {\int \mathcal{N}(\cdot | \bar{A}^{\tau} \mathrm{u}_{t:t+N-1}^{0, j}, \Sigma^{\tau}) s^{0}(\mathrm{u}_{t:t+N-1}^{0, j}) \mathrm{d}\mathrm{u}_{t:t+N-1}^{0, j}}, \nonumber
    \end{align}
    which may be approximated using importance sampling as
    \begin{align}\label{eq:is_score_function_control}
        &\nabla_{\mathrm{u}_{t:t+N-1}^{\tau, j}} \log{s^{\tau}(\mathrm{u}_{t:t+N-1}^{\tau, j})} \\
        &\approx - \Sigma^{\tau^{-1}} \mathrm{u}_{t:t+N-1}^{\tau, j} + \Sigma^{\tau^{-1}} \frac{\sum_{i=1}^{N_s} \bar{A}^{\tau} \mathrm{u}_{t:t+N-1}^{0, j, i} s^{0}(\mathrm{u}_{t:t+N-1}^{0, j, i})} {\sum_{i=1}^{N_s} s^{0}(\mathrm{u}_{t:t+N-1}^{0, j, i})}, \nonumber
    \end{align}
    where 
    \begin{align} \label{eq:importance_sampling}
        &\mathrm{u}_{t:t+N-1}^{0, j, i} \sim \mathcal{N}(\bar{A}^{\tau^{-1}} \mathrm{u}_{t:t+N-1}^{\tau, j}, \bar{A}^{\tau^\top} \Sigma^{\tau^{-1}} \bar{A}^{\tau})
    \end{align}
    for all $i = 1, \dots, N_s$ and $\tau = N_d, \dots, 1$, where $i$ is an additional index representing multiples samples from each mode $j$.
\end{theorem}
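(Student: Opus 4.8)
The plan is to recognize that Theorem~\ref{thm:score_function} is the control-sequence counterpart of Proposition~\ref{prop:score_function} of \cite{pan2024model}: it is obtained by substituting the optimal distribution $s^{0}(\cdot \mid x_{t})$ for the generic data distribution $q^{0}(\cdot)$, and the affine forward-process mean $\bar{A}^{\tau}\mathrm{u}^{0,j}$ for $\mu^{\tau}(y^{0})$. I would therefore reproduce that derivation line by line in the new notation, invoking Lemma~\ref{lem:sde_ddpm_equivalence} to supply the Gaussian form of the forward kernel.

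First I would apply the chain rule to write $\nabla \log s^{\tau} = \nabla s^{\tau}/s^{\tau}$, and then express the marginal $s^{\tau}(\mathrm{u}^{\tau,j})$ as the integral of the conditional kernel $s^{\tau\mid 0}(\mathrm{u}^{\tau,j}\mid\mathrm{u}^{0,j})$ against $s^{0}(\mathrm{u}^{0,j})$. By~\eqref{eq:proposed_forward_diffusion}, which follows from Lemma~\ref{lem:sde_ddpm_equivalence}, this conditional is exactly $\mathcal{N}(\cdot\mid\bar{A}^{\tau}\mathrm{u}^{0,j},\Sigma^{\tau})$. Since the gradient is taken with respect to $\mathrm{u}^{\tau,j}$ while the variable of integration is $\mathrm{u}^{0,j}$, I would move the gradient inside the integral, differentiate the Gaussian to produce the factor $-\Sigma^{\tau^{-1}}(\mathrm{u}^{\tau,j}-\bar{A}^{\tau}\mathrm{u}^{0,j})$, and split the expression. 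The term proportional to $\mathrm{u}^{\tau,j}$ factors out of the integral, leaving a ratio equal to one and hence the $-\Sigma^{\tau^{-1}}\mathrm{u}^{\tau,j}$ contribution, while the remaining term is precisely the conditional-mean expression in~\eqref{eq:score_function_control}. This mirrors~\eqref{eq:score_func_proof_chain_rule}--\eqref{eq:score_func_proof_expand} verbatim.

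For the importance-sampling form~\eqref{eq:is_score_function_control}, I would complete the square in the Gaussian exponent to rewrite $\mathcal{N}(\mathrm{u}^{\tau,j}\mid\bar{A}^{\tau}\mathrm{u}^{0,j},\Sigma^{\tau})$ as a Gaussian in the integration variable $\mathrm{u}^{0,j}$ centered at $\bar{A}^{\tau^{-1}}\mathrm{u}^{\tau,j}$, namely the proposal~\eqref{eq:importance_sampling}. The numerator and denominator integrals then become expectations under this proposal, which I would replace by the Monte Carlo averages over the $N_s$ draws $\mathrm{u}^{0,j,i}$. A point worth emphasizing is that only the unnormalized density $\exp(-J_{\mathrm{DC}}/\lambda)$ of~\eqref{eq:score_dist} is needed, since the intractable normalizing constant of $s^{0}$ cancels between the numerator and the denominator of the ratio.

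I do not anticipate a deep obstacle, as the algebra is routine once the forward kernel is identified as Gaussian; the steps requiring care are the index bookkeeping between the diffusion step $\tau$ and the cumulative coefficient $\bar{A}^{\tau}$ inherited from Lemma~\ref{lem:sde_ddpm_equivalence}, and the observation that the single-sample score formula applies unchanged to each mode $j$. Although truncating the forward process renders the aggregate prior multimodal, every trajectory is diffused and subsequently denoised independently, so the score of the single marginal $s^{\tau}$ evaluated at the point $\mathrm{u}^{\tau,j}$ is all that is required, with no mixture cross-terms entering the per-mode computation.
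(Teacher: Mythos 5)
Your proposal is correct and matches the paper's approach: the paper's proof of Theorem~\ref{thm:score_function} simply states that the result follows immediately from Proposition~\ref{prop:score_function} under the substitutions $q^{0}\mapsto s^{0}(\cdot\mid x_{t})$ and $\mu^{\tau}(y^{0})\mapsto\bar{A}^{\tau}\mathrm{u}^{0,j}_{t:t+N-1}$, which is exactly the specialization you carry out (you merely spell out the chain-rule/Bayes/reparameterization steps that the cited proposition already contains). Your added remarks on the cancellation of the normalizing constant of $s^{0}$ and on the per-mode independence of the score computation are accurate and consistent with the paper.
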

\begin{proof}
    The result follows immediately from Proposition~\ref{prop:score_function}, and thus the proof is omitted.
\end{proof}

The key issue is then how to generate the samples $\{\mathrm{u}_{t:t+N-1}^{0, j} \sim s^{0}(\cdot | x_{t})\}_{j=1}^{N_m}$ from the optimal distribution.
We exploit the receding horizon structure of MPC by utilizing the common assumption that the optimal solution is similar between consecutive time-steps, and employ the approximation $s^{0}(\cdot | x_{t+1}) \approx s^{0}(\cdot | x_{t})$.
Thus, the previous solutions are used to bootstrap the diffusion process at the next sampling time using 
\begin{align} \label{eq:receding_horizon_prior}
    \mathrm{u}_{t+1:t+N}^{N_d, j} \sim \mathcal{N}(\bar{A}^{N_d-1} \mathrm{u}_{t:t+N-1}^{0, j}, \Sigma^{N_d-1}).
\end{align}
This process is summarized by Algorithm~\ref{alg:RHD} and shown in Fig.~\ref{fig:generative_diffusion}.
The key idea is to use the previous Model Predictive Diffusion (MPD) solution to construct a prior distribution made up of a mixture of Gaussians with the previous solution as the modes.
Samples from the Gaussian mixture prior are then denoised to derive multiple samples of the optimal target distribution, following the standard model-based diffusion procedure.
The optimal control to apply is then selected as the optimal of these samples, while all samples are used to bootstrap the next iteration of the MPD algorithm and construct a new prior for the next time-step.

\begin{algorithm}[h]
\footnotesize
\caption{Model Predictive Generative Diffusion}
\label{alg:RHD}
\begin{algorithmic}[1]
\Require Number of modes, number of diffusion steps: $N_m$, $N_d$
\Require Optimal probability distribution function $s^{0}(\cdot)$
\Require Initial control samples guess: $\{\mathrm{u}^{0, j}_{0:N-1}\}_{j=1}^{N_m}$
\Require Forward diffusion dynamics parameters: $\{\tilde{A}^{\tau}$, $B^{\tau} \}_{\tau=1}^{N_d}$
\For{$t = 1, 2, \dots$}
    \For{$j = 1, \dots, N_m$}{ \textbf{in parallel}}
        \State Construct dynamic prior distribution using forward SDE \eqref{eq:proposed_forward_diffusion}
        \State Sample from prior distribution using~\eqref{eq:receding_horizon_prior} 
        \For{$\tau = N_d-1, \dots, 0$}
            \State Calculate $\nabla_{\mathrm{u}_{t:t+N-1}^{\tau, j}} \log{s^{\tau}(\cdot)}$ using Theorem~\ref{thm:score_function} 
            \State Take one step of the backwards dynamics using Theorem~\ref{thm:proposed_denoising} 
        \EndFor
    \EndFor
    \State Select optimal $\mathrm{u}_{t:t+N-1}^{\ast}$ from $\{\mathrm{u}_{t:t+N-1}^{0, j} \}_{j=1}^{N_m}$
\EndFor
\end{algorithmic}
\end{algorithm}

\begin{remark} \label{lem:mppi}
    Observe that when $N_d = 1$, $N_m = 1$, and $\tilde{A}^{\tau} = I$,~\eqref{eq:proposed_denoising_ode} reduces to 
    \begin{align} \label{eq:mppi}
        &\mathrm{u}^{0, 1}_{t:t+N-1} =\frac{\sum_{i=1}^{N_s} \mathrm{u}_{t:t+N-1}^{0, 1, i} s^{0}(\mathrm{u}_{t:t+N-1}^{0, 1, i})} {\sum_{i=1}^{N_s} s^{0}(\mathrm{u}_{t:t+N-1}^{0, 1, i})},
    \end{align}
    where $\mathrm{u}_{t:t+N-1}^{0, 1, i} \sim \mathcal{N}(\mathrm{u}_{t:t+N-1}^{N_d, 1}, \frac{1}{2} B^{N_d} B^{{N_d}^{\top}})$.
    Equation~\eqref{eq:mppi} is the importance sampling policy employed by model predictive path integral control.
\end{remark}
\begin{proof}
    Utilizing~\eqref{eq:proposed_denoising_ode}~and~\eqref{eq:is_score_function_control}, we have
    \begin{align*}
        &\mathrm{u}_{t:t+N-1}^{0, 1} = (I - \tilde{A}^{N_d}) \mathrm{u}_{t:t+N-1}^{N_d, 1} \nonumber\\
        &\quad + \frac{1}{2} B^{N_d} B^{{N_d}^\top} \nabla_{\mathrm{u}_{t:t+N-1}^{N_d, 1}} \log s^{N_d}(\mathrm{u}_{t:t+N-1}^{N_d, 1}) \\
        &\mathrm{u}_{t:t+N-1}^{0, 1} = (I - \tilde{A}^{N_d}) \mathrm{u}_{t:t+N-1}^{N_d, 1} \\
        &\quad + \frac{1}{2} B^{N_d} B^{{N_d}^\top} \Big[- \Sigma^{N_d^{-1}} \mathrm{u}_{t:t+N-1}^{N_d, 1} \\
        &\quad + \Sigma^{N_d^{-1}} \frac{\sum_{i=1}^{N_s} \bar{A}^{N_d} \mathrm{u}_{t:t+N-1}^{0, 1, i} s^{0}(\mathrm{u}_{t:t+N-1}^{0, 1, i})} {\sum_{i=1}^{N_s} s^{0}(\mathrm{u}_{t:t+N-1}^{0, 1, i})} \Big] \\
        &\mathrm{u}^{0, 1}_{t:t+N-1} = \mathrm{u}^{N_d, 1}_{t:t+N-1} \\
        & \quad + \frac{1}{2} B^{N_d} B^{{N_d}^\top} \Big[ - (B^{N_d} B^{{N_d}^\top})^{-1} \mathrm{u}^{N_d, 1}_{t:t+N-1} \nonumber\\
        & \quad + (B^{N_d} B^{{N_d}^\top})^{-1} \frac{\sum_{i=1}^{N_s} \mathrm{u}^{0, 1, i}_{t:t+N-1} s^{0}(\mathrm{u}^{0, 1, i}_{t:t+N-1})} {\sum_{i=1}^{N_s} s^{0}(\mathrm{u}^{0, 1, i}_{t:t+N-1})} \Big], \\ 
        &\mathrm{u}^{0, 1}_{t:t+N-1} = \mathrm{u}^{N_d, 1}_{t:t+N-1}  - \frac{1}{2} \mathrm{u}^{N_d, 1}_{t:t+N-1} \nonumber\\
        & \quad + \frac{1}{2} \frac{\sum_{i=1}^{N_s} \mathrm{u}^{0, 1, i}_{t:t+N-1} s^{0}(\mathrm{u}^{0, 1, i}_{t:t+N-1})} {\sum_{i=1}^{N_s} s^{0}(\mathrm{u}^{0, 1, i}_{t:t+N-1})}, \\
    \end{align*}
    where $\mathrm{u}^{0, 1, i}_{t:t+N-1} \sim \mathcal{N}(\mathrm{u}_{t:t+N-1}^{N_d, 1}, B^{N_d} B^{{N_d}^{\top}})$.
    Thus 
    \begin{align*}
        &\mathrm{u}^{0, 1}_{t:t+N-1} = \mathrm{u}^{N_d, 1}_{t:t+N-1}  - \frac{1}{2} \mathrm{u}^{N_d, 1}_{t:t+N-1} + \frac{1}{2}  \mathrm{u}_{t:t+N-1}^{N_d, 1} \nonumber\\
        & \quad + \frac{1}{2} \frac{\sum_{i=1}^{N_s} \mathrm{\tilde{u}}^{0, 1, i}_{t:t+N-1} s^{0}(\mathrm{\tilde{u}}^{0, 1, i}_{t:t+N-1})} {\sum_{i=1}^{N_s} s^{0}(\mathrm{\tilde{u}}^{0, 1, i}_{t:t+N-1})}, \\
        &\mathrm{u}^{0, 1}_{t:t+N-1} = \mathrm{u}^{N_d, 1}_{t:t+N-1} + \frac{1}{2} \frac{\sum_{i=1}^{N_s} \mathrm{\tilde{u}}^{0, 1, i}_{t:t+N-1} s^{0}(\mathrm{\tilde{u}}^{0, 1, i}_{t:t+N-1})} {\sum_{i=1}^{N_s} s^{0}(\mathrm{\tilde{u}}^{0, 1, i}_{t:t+N-1})},
    \end{align*}
    where $\mathrm{\tilde{u}}^{0, 1, i}_{t:t+N-1} \sim \mathcal{N}(0, B^{N_d} B^{{N_d}^{\top}})$,
    which reduces to the result~\eqref{eq:mppi}.
\end{proof}
Thus, the dual MPPI algorithm employed in our prior work \cite{knaup2024active} may be interpreted as a special case of the proposed approach, since (dual) MPPI uses the sampling scheme given by \eqref{eq:mppi}. 
The proposed model predictive diffusion algorithm is more flexible, however, since it allows for multiple sampling steps (when $N_{d} > 1$) and sampling from a multi-modal distribution (when $N_m > 1$).

\subsection{Sampling Approximation} \label{sec:proposed_sampling_approximation}

We utilize sampling-based approximations to update the belief distribution~\eqref{eq:bayesian_recursive}, conditional predicted belief distributions~\eqref{eq:bayesian_predictive}, and expected cost~\eqref{prob:dual}. 
Additionally, we utilize the importance sampling approximation suggested by Theorem~\ref{thm:score_function} to evaluate the gradient steps of the denoising process.

The belief distribution is learned online using a particle filter. 
To this end, we draw $N_p$ samples from $b(\cdot)$ according to 
\begin{align} \label{eq:particle_sampling}
    \theta^{j} \sim b(\cdot), \quad j = 1, \dots, N_p,
\end{align}
and initialize the corresponding weights to $\omega^{j}_{0} = 1 / N_p$.
The belief distribution is then approximated online by updating the weights according to
\begin{align} \label{eq:particle_weight_update}
    \omega^{j}_{t+1} \propto \omega^{j}_{t} f(x_{t+1} | x_{t}, u_{t}, \theta^{j}),
\end{align}
which is the particle approximation of~\eqref{eq:bayesian_recursive}.
\begin{remark}
    Various resampling schemes may be added to \eqref{eq:particle_sampling}-\eqref{eq:particle_weight_update} to enrich the sampled parameters. However, this is an implementation consideration rather than a theoretical one, and as noted in \cite{knaup2024active}, we found it unnecessary for the application considered.
\end{remark}

The predicted belief distribution is propagated using a similar particle approximation.
The predicted particles are resampled using the current belief distribution
\begin{align} \label{eq:predicted_resampling}
    \hat{\theta}^{j}_{t} \sim b_{t}(\cdot), \quad j = 1, \dots, \hat{N}_p,
\end{align}
and the predicted weights are initialized as $\hat{\omega}^{j}_{t|t} = 1 / \hat{N}_p$.
The weights are then forward predicted using a particle approximation of~\eqref{eq:bayesian_predictive}, given by
\begin{align} \label{eq:predicted_weight_update}
    \hat{\omega}^{j}_{k+1|t} \propto \hat{\omega}^{j}_{k|t} f(\Expectation_{\bar{\theta} \sim b_{t}}[x_{k+1} | x_{t}, u_{t:k}, \bar{\theta}] | x_{t}, \mathrm{u}_{t:k}, \hat{\theta}_{t}^{j}) ,
\end{align}
for $k = t, t+1, \dots, t+N-1$,
where we have moved the expectation inside the pdf to reduce the number of costly evaluations of the pdf,
which may be interpreted as conditioning the predicted belief distribution on the expected observation rather than the expected update. 

Finally, the cost of Problem~\eqref{prob:dual} is evaluated using the sample average approximation given by 
\begin{subequations} \label{prob:saa}
    \begin{align}
        & \min_{\mathrm{u}_{t:t+N-1|t}} \sum_{j=1}^{\hat{N}_p} \left[\phi(x_{t+N|t}^{j}) \hat{\omega}_{t+N|t}^{j} + \sum_{k=t}^{t+N-1} \ell_{k}(x_{k|t}^{j}, u_{k|t}) \hat{\omega}_{k|t}^{j} \right], \label{cost:saa} \\
        & \text{subject to} \nonumber\\
        & x_{t|t}^{j} = x_{t}, \quad \hat{\theta}^{j}_{t} \sim b_{t}(\cdot), \quad \hat{\omega}_{t|t}^{j} = 1/\hat{N}_p \quad j = 1, \dots, \hat{N}_p, \\
        & \hat{\omega}^{j}_{k+1|t} \propto \hat{\omega}^{j}_{k|t} f(\frac{1}{\hat{N}_p} \sum_{j=1}^{\hat{N}_{p}} x_{k+1|t}^{j} | x_{t}, \mathrm{u}_{t:k|t}, \hat{\theta}_{t}^{j}), \label{const:saa_weights} \\
        & x_{k+1|t}^{j} \sim f(x_{k|t}^{j}, u_{k|t}, \hat{\theta}^{j}). \label{const:saa_dynamics}
    \end{align}
\end{subequations}
Associated with problem~\eqref{prob:saa}, we have the following theorem.
\begin{theorem}[\cite{knaup2024active}]
    The solution to problem~\eqref{prob:saa} preserves the dual control effect \cite{hu2022active}, that is, the planned control actions affect the entropy of the predicted future belief distribution.
\end{theorem}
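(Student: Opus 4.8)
The plan is to show that the Shannon entropy of the predicted belief, represented in the particle approximation of Problem~\eqref{prob:saa} by the weight vector $\{\hat{\omega}^{j}_{k|t}\}_{j=1}^{\hat{N}_p}$, is an explicit function of the planned control sequence $\mathrm{u}_{t:k-1|t}$, and hence that the planned controls influence this entropy. I would begin by defining the entropy of the predicted belief at stage $k$ as $H_{k|t} = -\sum_{j=1}^{\hat{N}_p} \hat{\omega}^{j}_{k|t} \log \hat{\omega}^{j}_{k|t}$, the discrete entropy of the normalized particle weights, which is the natural particle-filter surrogate for the differential entropy of $\hat{b}_{k|t}(\cdot)$ that appears in Theorem~\ref{thm:predictive_belief}.

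The core of the argument is to trace the control dependence through the weight recursion~\eqref{const:saa_weights}. First I would observe that each predicted particle state $x^{j}_{k+1|t}$ is generated by the propagation rule~\eqref{const:saa_dynamics}, $x^{j}_{k+1|t} \sim f(x^{j}_{k|t}, u_{k|t}, \hat{\theta}^{j})$, so that $x^{j}_{k+1|t}$---and therefore the shared expected observation $\frac{1}{\hat{N}_p}\sum_{j} x^{j}_{k+1|t}$ appearing inside the likelihood---depends explicitly on $u_{k|t}$, and recursively on $\mathrm{u}_{t:k|t}$. Next, the likelihood factor $f\big(\frac{1}{\hat{N}_p}\sum_{j} x^{j}_{k+1|t} \mid x_{t}, \mathrm{u}_{t:k|t}, \hat{\theta}^{j}_{t}\big)$ in~\eqref{const:saa_weights} carries the control sequence both through this propagated mean and as an explicit conditioning argument. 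Consequently, the updated weights $\hat{\omega}^{j}_{k+1|t}$, being proportional to $\hat{\omega}^{j}_{k|t}$ times this control-dependent factor and then renormalized, are functions of $\mathrm{u}_{t:k|t}$. By induction on $k$, starting from the control-independent initialization $\hat{\omega}^{j}_{t|t} = 1/\hat{N}_p$, the entire weight vector at every future stage is a function of the planned controls, and hence so is $H_{k|t}$.

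It then remains to argue that this dependence is genuine rather than degenerate. I would note that the weights stay uniform (maximal entropy) only when the control-dependent likelihood factor is identical across all particles $\hat{\theta}^{j}_{t}$, and that for a sequence driving the predicted trajectory into a region where the dynamics $f(\cdot,\cdot,\theta)$ discriminate strongly between candidate parameters, the likelihood factors differ across particles and the entropy strictly decreases. Therefore $H_{k|t}$ is non-constant in $\mathrm{u}_{t:k-1|t}$, which is precisely the dual control effect: the optimizer in~\eqref{cost:saa} can trade cost against information because its choice of controls reshapes the predicted belief's entropy. This also contrasts with the certainty-equivalence and stochastic formulations~\eqref{prob:ce}--\eqref{prob:stoch}, where the assumption $b_{k}(\cdot)=b_{t}(\cdot)$ renders the predicted belief---and thus its entropy---independent of future controls.

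I expect the main obstacle to be this last step: rigorously establishing that the control dependence is non-degenerate, i.e.\ exhibiting control sequences that produce distinct entropies, rather than merely showing that the entropy formula formally contains $\mathrm{u}$. A secondary subtlety is the approximation in~\eqref{const:saa_weights}, where the expectation has been moved inside the pdf so that the weight update conditions on the expected observation; I would need to confirm that this shared-mean coupling still permits the per-particle likelihoods to differ, so that $H_{k|t}$ remains a non-trivial function of the controls.
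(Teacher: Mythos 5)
Your proposal is correct and follows essentially the same route as the paper: the paper's proof is a one-sentence observation that the weight recursion (the particle approximation of the predicted belief update) makes the weights $\hat{\omega}^{j}_{k+1|t}$ of the categorical distribution over $\{\hat{\theta}^{j}_{t}\}$ explicit functions of the planned controls $\mathrm{u}_{t:k|t}$, which is exactly the core of your argument. The additional material you supply---the induction from the uniform initialization and the concern about non-degeneracy of the control dependence---goes beyond what the paper proves (it treats the structural dependence itself as the dual control effect and does not attempt to exhibit control sequences yielding distinct entropies), so you may simply note that the stated property is the qualitative one.
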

\begin{proof}
    The proof follows from \eqref{eq:predicted_weight_update}, in which the planned control sequence $u_{t:k|t}$ affects the weights $\hat{\omega}^{j}_{k+1|t}$ of the categorical distribution over $\{\theta^j\}_{j=1}^{\hat{N}_p}$.
\end{proof}

Finally, the proposed dual model predictive diffusion approach is summarized in Algorithm~\ref{alg:DMPD}.
Although the sampling approximations used here are very common in the robotics literature, it should be noted that the accuracy of these approximations relies on the ability to draw a sufficient number of samples.
To this end, our implementation in Section~\ref{sec:interactive_autonomous_driving} relies on parallel GPU-enabled processing using Jax~\cite{jax2018github}.

\begin{algorithm}[h]
\footnotesize
\caption{Dual Model Predictive Diffusion}
\label{alg:DMPD}
\begin{algorithmic}[1]
\Require Number of particles, downsampled particle count, number of modes, number of diffusion steps, importance sampling size: $N_p$, $\hat{N}_p$, $N_m$, $N_d$, $N_s$
\Require Prior parameter belief distribution $b(\theta)$, dynamics $f(\cdot, \cdot, \cdot)$, cost function $J_{\text{DC}}(\cdot, \cdot)$
\Require Prior control samples/guess: $\{\mathrm{u}^{0, j}_{-1}\}_{j=1}^{N_m}$
\Require Forward diffusion dynamics parameters: $\{\tilde{A}^{\tau}$, $B^{\tau} \}_{\tau=1}^{N_d}$
\State Initialize belief distribution $b_{0}(\cdot) = b(\cdot)$, sample parameters using~\eqref{eq:particle_sampling}, and initialize weights $\omega^{j}_{0} = 1 / N_p$
\For{$t = 0, 1, \dots$}
    \State Update belief weights using~\eqref{eq:particle_weight_update}
    \State Sample predicted parameters using~\eqref{eq:predicted_resampling} and initialize predicted weights $\hat{\omega}^{j}_{t|t} = 1 / \hat{N}_p$
    \State Sample $N_m$ control sequences from the prior~\eqref{eq:receding_horizon_prior}
    \For{$\tau = N_d-1, \dots, 0$}
        \State Sample control sequences $\{u_{t:t+N-1}^{\tau, j, i} \}_{j=1, i=1}^{N_m, N_{s}}$ from~\eqref{eq:importance_sampling}
        \State Propagate the dynamics for each $u_{t:t+N-1}^{\tau, j, i}$
        using~\eqref{const:saa_dynamics}
        \State Propagate the weights 
        according to~\eqref{const:saa_weights}
        \State Compute the cost 
        according to~\eqref{cost:saa}
        \State Evaluate the optimal pdf
        according to~\eqref{eq:score_dist}
        \State Estimate the score function using~\eqref{eq:is_score_function_control}
        \State Take one step of the backwards dynamics using~\eqref{eq:proposed_denoising_ode}
    \EndFor
    \State Select and apply optimal $\mathrm{u}_{t:t+N-1}^{\ast}$ from $\{\mathrm{u}_{t:t+N-1}^{0, j} \}_{j=1}^{N_m}$
\EndFor
\end{algorithmic}
\end{algorithm}

\section{Interactive Autonomous Driving} \label{sec:interactive_autonomous_driving}

We evaluate the proposed method in a challenging interactive autonomous driving application where an autonomous ego vehicle must successfully complete a lane change/merge with one of several non-cooperative traffic vehicles.
In particular, we consider congested driving conditions in which inter-vehicle interaction is necessary to successfully complete the merge before reaching the end of the merge window. 
Moreover, we consider traffic vehicles having unknown heterogeneous driving behaviors, with varying levels of friendliness vs. aggressive behavior, so that the ego vehicle must successfully identify the other vehicles' behaviors online in order to complete the merge. 

We consider a scenario shown in Fig.~\ref{fig:merge_scenario} with one ego vehicle in the merge lane and $n_v$ traffic vehicles in the main lane.
The state of the vehicles at time $t$ is given by
$x_{t}^{i} = [v_{t}^{i}, \Psi_{t}^{i}, X_{t}^{i}, Y_{t}^{i}]^{\top}$
where $i = 0, 1, \dots, n_v$,
where 0 is the index of the ego vehicle,
1 is the index of the rear-most traffic vehicle,
$n_v$ is the index of the lead traffic vehicle,
and where $v$ is the vehicle's longitudinal velocity,
$\Psi$ is the heading angle,
and $X$ and $Y$ are the Cartesian position.
The state of all vehicles is given by the stacked vector 
$x_{t} = [x_{t}^{0 \top}, x_{t}^{1 \top}, \dots, x_{t}^{n_v \top} ]^{\top}$,
and it is assumed the state $x_{t}$ is fully observable to all vehicles.
\begin{figure}[ht]
    \centering
    \includegraphics[width=0.99\linewidth]{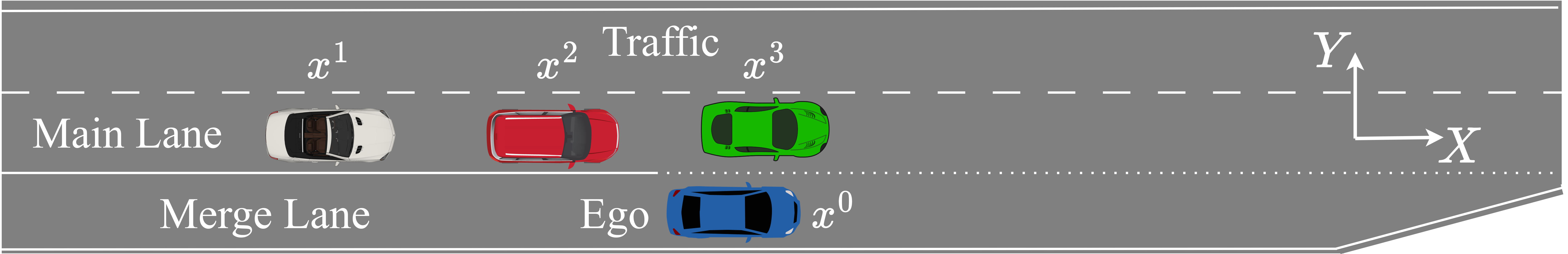}
    \caption{Merge scenario.}
    \label{fig:merge_scenario}
\end{figure}

In contrast to our previous work \cite{knaup2024active}, in which we used Dual MPPI only for generating a motion plan which was then followed by a low-level controller, in this paper, we utilize the proposed approach as an integrated motion planner and control algorithm.
Consequently, we consider the decision variables of the ego vehicle to be the steering angle $\delta_{t}^{0}$ and the longitudinal acceleration $a_{t}^{0}$, so that the control action is given by
$u_{t} = [a_{t}^{0}, \delta_{t}^{0}]^{\top}$.
The vehicle dynamics are modeled using the kinematic bicycle model \cite{kong2015kinematic}, given by 
\begin{subequations}
    \begin{align}
        \dot{v}_{t}^{i} &= a_{t}^{i}, \\
        \dot{\Psi}_{t}^{i} &= \sin(\beta_{t}^{i}) v_{t}^{i} / L_{r}^{i}, \\
        \dot{X}_{t}^{i} &= v_{t}^{i} \cos(\psi_{t}^{i} + \beta_{t}^{i}), \\
        \dot{Y}_{t}^{i} &= v_{t}^{i} \sin(\psi_{t}^{i} + \beta_{t}^{i}),
    \end{align}
\end{subequations}
where $\beta_{t}^{i} = \arctan(\frac{L_{r}^{i}}{L_{f}^{i} + L_{r}^{i}} \tan(\delta_{t}^{i}))$, and where $L_{f}$ and $L_{r}$ are the distances between the center of mass and the front and rear axles, respectively, which are assumed known for all vehicles.
As in our previous work \cite{knaup2024active}, we model the acceleration of the traffic vehicles using the merge-reactive intelligent driver model (MR-IDM) \cite{holley2023mr}, given by $a_{t}^{i} = \rho(x_{t}, \theta^{i})$, for $i=1, \dots, n_v$, where $\theta^{i}$ are the $i^{th}$ vehicle's driving behavior parameters (which is the vector of MR-IDM parameters), and where the equations for $\rho(\cdot, \cdot)$ may be found in \cite{holley2023mr}.
Therefore, the dynamics model is given by
\begin{align}
    f(x_{t}, u_{t}, \theta) &= x_{t} + [\dot{x}_{t}^{0}(x_{t}^{0}, \sigma(u_{t}))^{\top}, \dot{x}_{t}^{1}(x_{t}^{1}, \rho(x_{t}, \theta^{1}))^{\top}, \dots, \nonumber\\ 
    &\quad \dot{x}_{t}^{n_v}(x_{t}^{n_v}, \rho(x_{t}, \theta^{n_v}))^{\top} ]^{\top} \Delta t + w_{t},
\end{align}
where $\theta = [\theta^{1 \top}, \dots, \theta^{n_v \top} ]^{\top}$, 
$\sigma(\cdot)$ is a clamping function to impose the constraints $a_{\mathrm{min}} \leq a_{t} \leq a_{\mathrm{max}}$, $\delta_{\mathrm{min}} \leq \delta_{t} \leq a_{\mathrm{max}}$,
$\dot{x}_{t}^{i}(\cdot, \cdot) = [\dot{v}_{t}^{i \top}, \dot{\Psi}_{t}^{i \top}, \dot{X}_{t}^{i \top}, \dot{Y}_{t}^{i \top}]^{\top}$, 
$\Delta t$ is the integration time-step for the discrete dynamics, 
$w_{t} \sim \mathcal{N}(0_{n_x \times n_x}, \Sigma_w)$, 
and with $\Sigma_w \succ 0_{n_x \times n_x}$.
The unknown parameter vectors for all vehicles $\theta$ are estimated online using a particle filter as described in Section \ref{sec:proposed_sampling_approximation}, with the prior distribution described in~\cite{knaup2024active}.
As described in Algorithm~\ref{alg:DMPD} and Section~\ref{sec:proposed_sampling_approximation}, the current belief state from the particle filter is used to initialize the dual control problem at every time-step.

The cost is given by
\begin{subequations}
\begin{align}
    \ell_k(x_k) &= (x_k^{0} - x^g)^\top Q (\star) + u_{k}^{\top} R u_{k} + \ell^\text{pen}(x_k), \\
    \phi(x_{t+N}) &= (x_{t+N}^{0} - x^g)^\top Q_f (\star) + \ell^\text{pen}(x_{t+N}), \\
    \ell^\text{pen}(x_k) &= (\mathbf{1}^{\text{coll}}(x_k) + \mathbf{1}^\text{road}(x_k^{0}) + \mathbf{1}^\text{inval}(x_k)) Q_{\mathrm{pen}},
\end{align}
\end{subequations}
for $k=\{t, t+1,\dots,t+N-1 \}$,
where $x^g = [v^g, 0, 0, 0]^\top$ is the goal state, $Q = \text{diag}(Q_{v}, Q_{\Psi}, Q_{X}, Q_{Y}) \succeq 0_{n_x \times n_x}$ is the state cost matrix, $Q_f = \text{diag}(Q_{v_f}, Q_{\Psi_f}, Q_{X_f}, Q_{Y_f}) \succeq 0_{n_x \times n_x}$ is the terminal cost matrix, $R = \text{diag}(R_{a}, R_{\delta}) \succ 0_{n_u \times n_u}$, $Q_{\mathrm{pen}} \in \Re$ is the violation penalty coefficient, and $\mathbf{1}^{\text{coll}}(x_k)$, $\mathbf{1}^\text{road}(x_k)$, $\mathbf{1}^\text{inval}(x_k)$ are the indicator functions for a collision with another vehicle, violating the road boundaries, or an improper merge (not between two vehicles), respectively.\footnote{
As suggested in \cite{williams2018information}, control constraints are incorporated through clamping functions in the dynamics and task-related state constraints are incorporated through weighted indicator penalty functions in the cost.}

\subsection{Traffic Merge Experiment} \label{sec:traffic_merge_experiment}

We implemented the interactive autonomous driving scenario using a variation of the F1-Tenth autonomous vehicle platform, shown in Fig.~\ref{fig:racecar}.
The vehicles feature onboard compute using NVIDIA Jetson TX2s for the traffic vehicles and a NVIDIA Jetson Orin Nano for the ego vehicle.
The computational constraints of the Jetson Orin Nano require an efficient algorithm and highlight the computational efficiency of the proposed dual model predictive diffusion approach, which we ran at 10 Hz.
The experiments are carried out in the Georgia Tech Indoor Flight Lab (IFL), which provides a large open space to act as the ``highway" for the merge scenario.

The traffic vehicles were driven by an adaptive cruise control system which attempts to maintain a target distance from the leading vehicle, initialized to be less than the minimum distance needed to allow a merge between the vehicles.
One of the two traffic vehicles, designated the ``friendly" driver for that trial, were semi-human-controlled where the human would manually increase the following distance if and only if the ego vehicle was attempting to merge between the friendly car and its leading vehicle.
This introduces an additional factor of model-mismatch as the yielding behavior is human-controlled rather than being driven by the MR-IDM model as the framework (and our previous work \cite{knaup2024active}) assumed, and demonstrates that the proposed approach is robust to such model-mismatches.
\begin{figure}[ht]
    \centering
    \includegraphics[width=0.8\linewidth]{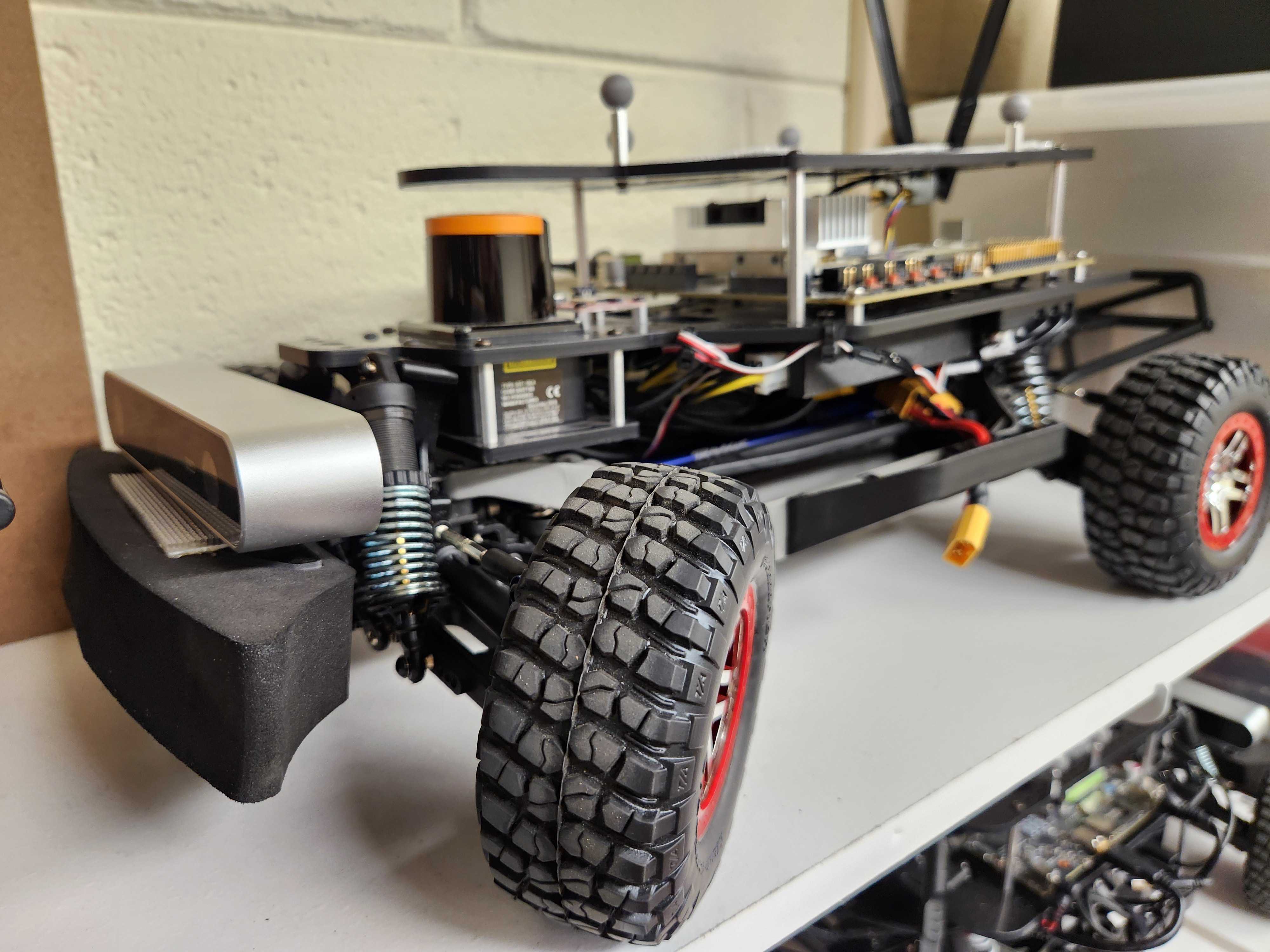}
    \caption{F1-Tenth platform.}
    \label{fig:racecar}
\end{figure}

The proposed approach is compared with two baseline methods: the dual model predictive path integral control (DMPPI) from \cite{knaup2024active} and the ensemble MPPI (EMPPI) from \cite{abraham2020model}.
DMPPI solves the proposed problem~\eqref{prob:saa} while ablating the multi-modal component of the diffusion solver; that is, it utilizes MPPI as a special case of generative diffusion. 
EMPPI additionally ablates the active learning component by utilizing MPPI to solve problem~\eqref{prob:stoch}.
The hyper-parameters of all three algorithms (e.g., number of samples, diffusion steps, etc.) were chosen to maximize performance while allowing for real-time updates at $10$ Hz on the Jetson Orin Nano.
For each method, we conducted $12$ trials featuring different start locations for the ego vehicle relative to the traffic vehicles as well as different variations of traffic vehicle parameters.
Importantly, in each trial, only \emph{one} of the traffic vehicles is ``friendly" and will yield to the ego vehicle. The friendly car assignment is randomized and the ego car does not have prior information about which car is friendly. 
Therefore, the ego vehicle must quickly learn the other vehicles' driving behavior parameters in order to find the friendly driver before reaching the end of the merge zone ($\approx 15$ m). 

Experimental results are shown in Table~\ref{tab:results}, and a video of several trials may be viewed \href{https://youtu.be/Q_JdZuopGL4}{here}.
The proposed approach and Dual MPPI are both able to successfully complete the merge in less than $15$ m in all trials, highlighting the effectiveness of the proposed dual control formulation, which induces ``probing" behaviors.
EMPPI, on the other hand, does not feature active learning and so, without probing behaviors, the vehicle can only merge when a gap opens naturally due to being next to a friendly driver, restricting the vehicle to complete the merge in a purely reactive manner.
As a result, EMPPI only successfully completes the merge $58$\% of the time.
Since EMPPI did not complete the merge in all cases, when calculating the average merge distance, we set the merge distance to $15$ m for the cases in which EMPPI failed to complete the merge in less than $15$ m.

The advantage of the proposed approach over DMPPI is highlighted by the fact that the proposed approach is able to complete the merge much earlier in less than $2$/$3$ the distance as DMPPI, on average, indicating the proposed approach is more efficient and produces better (closer to globally optimal) solutions, which correspond to faster merge completions (and a lower cost). 
Finding the right gap earlier will help in avoiding being too close to the end of the merge ramp in a real world highway on-ramp situation for an AV which can be critical when dealing with short merge ramps and closely spaced traffic. 
Next, we see that the high success rate and faster merging of the proposed approach are achieved without compromising safety, as shown by the fact that all three methods maintain similar minimum distances from the traffic vehicles.
Finally, the higher merge success rates for the active learning frameworks achieve higher success at the expense of higher control effort due to the fact that they need to probe the agents in order to improve their belief of the other agents' behavior online. 

\begin{table}[ht]
\caption{Experimental results over $12$ trials for each method.}
\label{tab:results}

\centering
\begin{tabular}{lccc}
                                           & EMPPI & DMPPI & Proposed    \\ \hline \hline
Merge Success Rate                         & 58\%  & 100\% & 100\% \\ \hline
Ave. Merge Dist. (m)                       & 10.5   & 7.1   & 4.3   \\ \hline
Ave. Min. Distance (m)                     & 0.53  & 0.52  & 0.53 \\ \hline
Ave. Acceleration (m/s$^2$) & 0.04  & 0.05  & 0.07  
\end{tabular}
\end{table}

The efficacy of the proposed approach is further illustrated in Fig.~\ref{fig:snapshots}, which gives an overview of a single trial for all three methods.
In this trial, the ego vehicle starts behind the traffic vehicles and only the green traffic vehicle will yield to the blue ego vehicle. 
Therefore, the ego vehicle must accelerate to align itself longitudinally with the traffic vehicles and determine which car will yield so that the merge may be completed.
As seen in Fig.~\ref{fig:snapshots}, only the dual control methods induce the necessary behavior to position the ego vehicle to probe the traffic vehicles.
Additionally, the proposed algorithm is able to open a gap and complete the merge well ahead of DMPPI by quickly determining that the red vehicle was not ``friendly" and accelerating to probe the next car, whereas DMPPI took much longer to evaluate its belief about the red car.

\begin{figure*}[ht]
    \centering
    \includegraphics[width=0.99\linewidth]{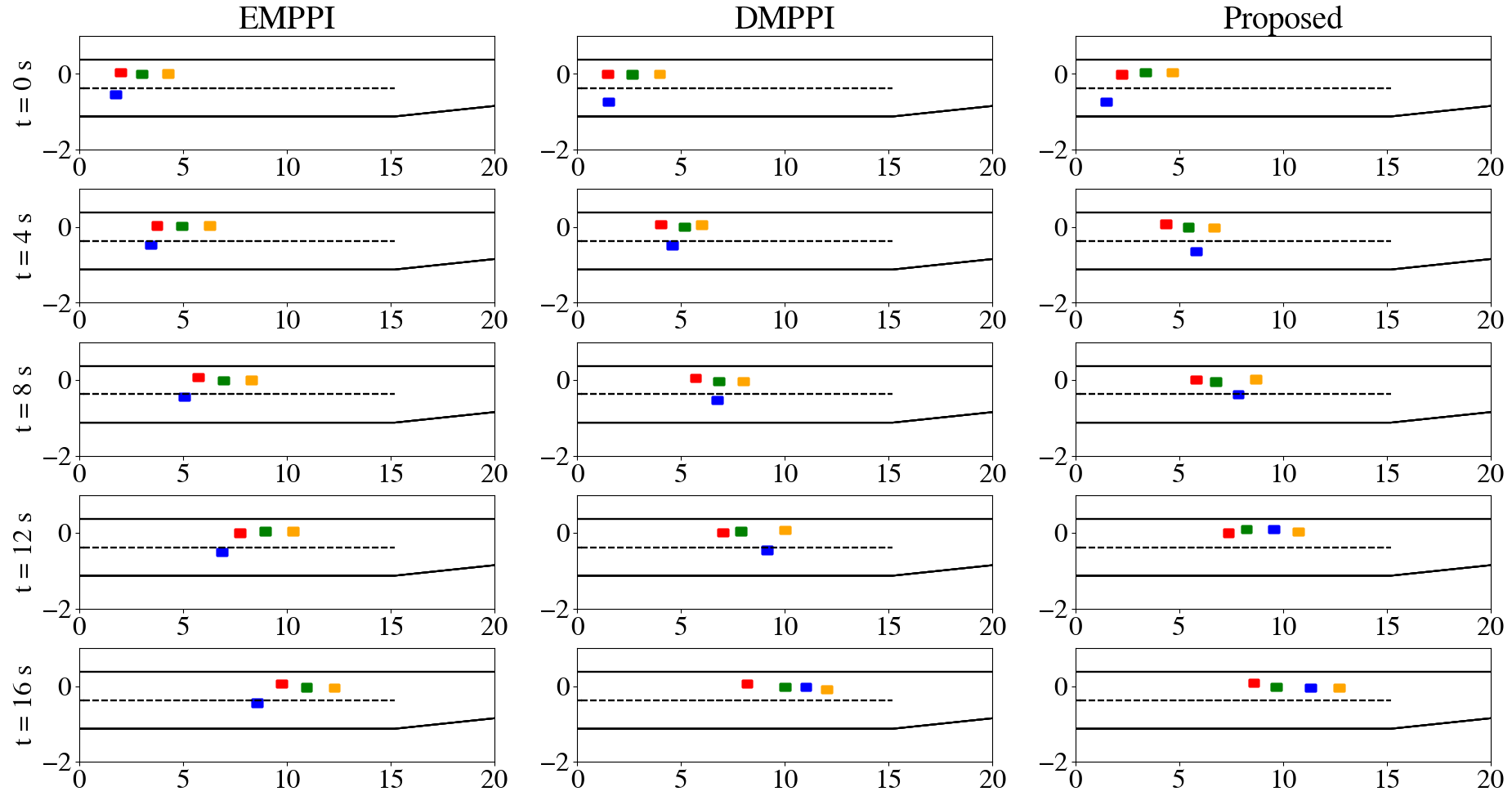}
    \caption{Depiction of single trial for proposed and baseline methods.}
    \label{fig:snapshots}
\end{figure*}

\section{Conclusion} \label{sec:conclusion}

In this paper, we have presented a novel dual control framework for autonomous highway merging, integrating model-based diffusion and online Bayesian inference to enhance interaction-aware planning. 
Our approach actively probes the surrounding human-driven cars to infer their intentions and adapt control actions intelligently. 
A key component of our framework is the novel model-based diffusion solver tailored for receding horizon optimization. 
This solver effectively handles the complexities of non-convex and multimodal interaction problems like merging, contributing to the robustness and adaptability of our approach, while remaining computationally efficient enough to run in real time (10 Hz) on an embedded computer. 
Through several real-world hardware experiments on the F1-Tenth platform, we have demonstrated that our framework improves the efficiency and safety of challenging autonomous driving maneuvers such as merging in dynamic traffic.
This work represents the first application of model-based diffusion to a dual control problem as well as its first application to an autonomous driving problem. 
These contributions advance the field of interaction aware planning and decision-making under uncertainty for autonomous driving, paving the way for more intelligent and adaptive navigation systems. 
Future work will focus on further refining the model-based diffusion solver and exploring its applications in other complex driving scenarios.

\section{Acknowledgments}

The authors thank Rohan Bansal, Shawn Wahi, and Saahir Dhanani for their help with setting up the F1-Tenth hardware experiments. 

\bibliographystyle{IEEEtran}
\bibliography{references}

\end{document}